\theoremstyle{definition}
\newtheorem{definition}{Definition}[section]
\newtheorem{theorem}{Theorem}
\newtheorem{prop}{Proposition}
\DeclareMathOperator*{\softmax}{softmax}
\newcommand{\e}[2]{\mathbb{E}_{#1}\left[ #2 \right] }
\newcommand{\h}[2]{\mathbb{H}_{#1}\left[ #2 \right] }
\newcommand{\abr}{\textsc}
\newcommand{\kld}[2]{D_{\mathrm{KL}} \left[ \left. \left. #1 \right|\right| #2 \right] }
\newcommand{\mi}[1]{I_{#1}}
\newcommand{\removelatexerror}{\let\@latex@error\@gobble}
\title{Dual Reconstruction: a Unifying Objective \\ for Semi-Supervised Neural Machine Translation}
\author{Weijia Xu \\
	University of Maryland \\
	{\tt \href{mailto:weijia@cs.umd.edu}{weijia@cs.umd.edu}} \\\And
	Xing Niu\Thanks{\ Work was done at the University of Maryland.} \\
	Amazon AI \\
	{\tt \href{mailto:xingniu@amazon.com}{xingniu@amazon.com}} \\\And
	Marine Carpuat \\
	University of Maryland \\
	{\tt \href{mailto:marine@cs.umd.edu}{marine@cs.umd.edu}} \\}
\date{}
\begin{document}
\maketitle
\begin{abstract}
While Iterative Back-Translation and Dual Learning effectively incorporate monolingual training data in neural machine translation, they use different objectives and heuristic gradient approximation strategies, and have not been extensively compared. We introduce a novel dual reconstruction objective that provides a unified view of Iterative Back-Translation and Dual Learning. It motivates a theoretical analysis and controlled empirical study on German-English and Turkish-English tasks, which both suggest that Iterative Back-Translation is more effective than Dual Learning despite its relative simplicity.
\end{abstract}

\section{Introduction}

Taking advantage of monolingual training data via Back-Translation~\citep{SennrichHB16}, Iterative Back-Translation~\citep{ZhangLZC18,CotterellK18} or Dual Learning~\citep{HeXQWYLM16}  has become a de facto requirement for building high quality Neural Machine Translation (NMT) systems~\citep{EdunovOAG18,Hassan18}. 
However, these methods rely on unrelated heuristic optimization objectives, and it is not clear what their respective strengths and weaknesses are, nor how they relate to the ideal but intractable objective of maximizing the marginal likelihood of the monolingual data (i.e.,~$p_\theta(\boldsymbol{y}) = \sum_{\boldsymbol{x}} p_\theta(\boldsymbol{y}\, | \,\boldsymbol{x}) q(\boldsymbol{x})$ given target sentences~$\boldsymbol{y}$, an NMT  model~$p_\theta(\boldsymbol{y}\, | \,\boldsymbol{x})$, and the prior distribution~$q(\boldsymbol{x})$ on source~$\boldsymbol{x}$).

\begin{figure}[ht]
    \centering
    \includegraphics[width=0.48\textwidth]{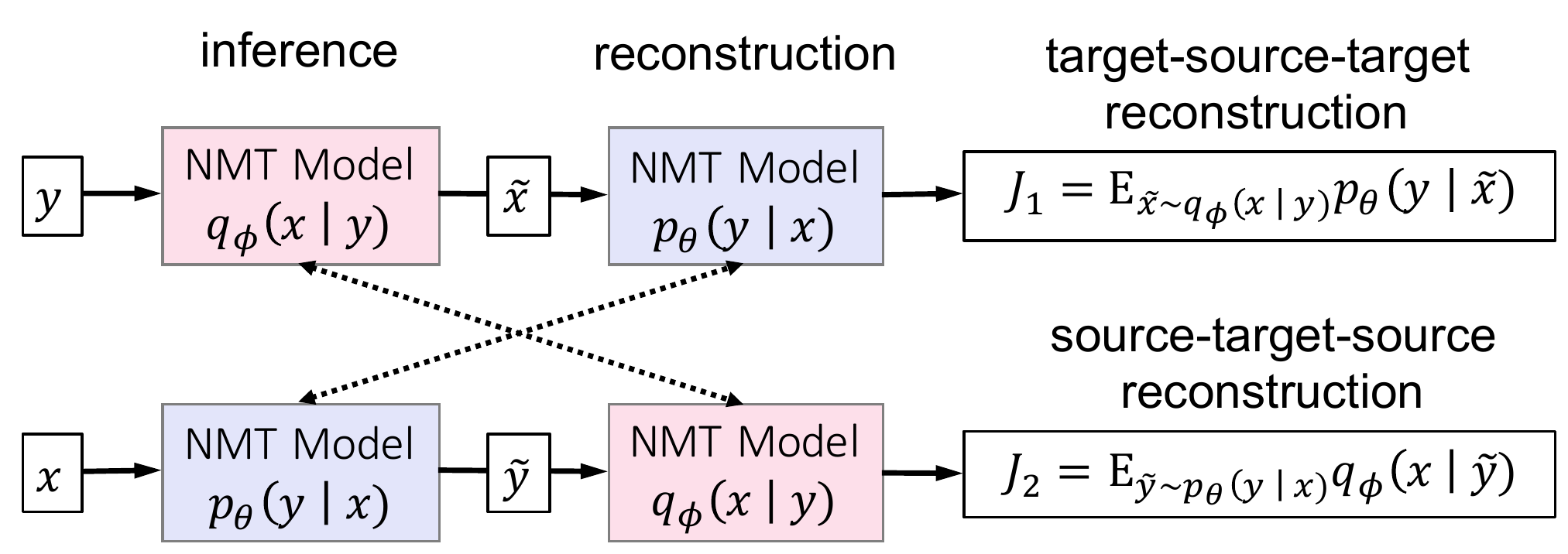}
\caption{Our dual reconstruction objective sums 1) a target-source-target objective~$\mathcal{J}_1$ on target sentences~$\boldsymbol{y}$ using the NMT model~$q_\phi(\boldsymbol{x}\, | \,\boldsymbol{y})$ for inference and~$p_\theta(\boldsymbol{y}\, | \,\boldsymbol{x})$ for reconstruction, and 2) a source-target-source objective~$\mathcal{J}_2$ on source  sentences~$\boldsymbol{x}$ using $p_\theta(\boldsymbol{y}\, | \,\boldsymbol{x})$ for inference and~$q_\phi(\boldsymbol{x}\, | \,\boldsymbol{y})$ for reconstruction. Models connected by dotted arrows share parameters.}
\label{fig:dual_reconstruction}
\end{figure}

Instead of proposing new methods, this paper sheds new light on how these established techniques work and how to use them. We introduce a {\bf dual reconstruction objective} to theoretically ground the comparison of semi-supervised training strategies that leverage monolingual data from both source and target languages (Figure~\ref{fig:dual_reconstruction}).
In Section~\ref{sec:theory}, we show that, under some assumptions, this objective remarkably shares the same global optimum as the intractable marginal likelihood objective where the model's marginal distribution~$p_\theta(\boldsymbol{y})$ coincides with the target sentence distribution~$p(\boldsymbol{y})$. We also show that Iterative Back-Translation (IBT) and Dual Learning can be viewed as different ways to approximate its optimization.

\looseness=-1
Theory suggests that IBT approximates the dual reconstruction objective more closely than the more complex Dual Learning approach, and in particular that Dual Learning's additional language model loss is redundant.
We investigate whether these differences matter in practice by conducting the first controlled empirical comparison of Back-Translation, IBT, and Dual Learning in high-resource (WMT de-en), low-resource (WMT tr-en), and cross-domain settings (News$\rightarrow$TED, de-en). Results support our theory that the additional language model loss and policy gradient estimation in Dual Learning is redundant and show that IBT outperforms the more complex Dual Learning algorithm in terms of translation quality.
Furthermore, we also compare different optimization strategies used in IBT to better balance translation quality against the computational cost.

\section{Background}
\paragraph{Notation} NMT models the probability of translating a source sequence~$\boldsymbol{x}$ into a target~$\boldsymbol{y}$ as $   p_{\theta}(\boldsymbol{y}\, | \,\boldsymbol{x}) = \prod_{t=1}^T p(y_t\, | \,\boldsymbol{y}_{<t}, \boldsymbol{x}; \theta)$ 
where~$\theta$ represents the model parameters, and~$T$ is the length of~$\boldsymbol{y}$~\cite{BahdanauCB15}.
The model computes the conditional probability of the next token at time~$t$ by~
$p(\cdot\, | \,\boldsymbol{y}_{<t}, \boldsymbol{x}; \theta) = \softmax(a(\boldsymbol{h}_t))$,
where~$a(\cdot)$ is a linear transformation, and~$\boldsymbol{h}_t$ is the hidden representation at step~$t$ usually modeled by an encoder-decoder network $\boldsymbol{h}_t = f(\boldsymbol{y}_{<t}, \boldsymbol{x})$. In supervised settings, NMT models are trained to maximize the likelihood of parallel sentence pairs:~$\mathcal{J}_{s}(\theta) = \sum_{(\boldsymbol{x}, \boldsymbol{y}) \in \mathcal{D}} \log p_{\theta}(\boldsymbol{y}\, | \,\boldsymbol{x})$ given data ~$\mathcal{D} = \{ (\boldsymbol{x}^{(n)}, \boldsymbol{y}^{(n)}) \}_{n=1}^N$.

\paragraph{IBT and Dual Learning} exploit large monolingual corpora which represent source and target language distributions better than the limited parallel corpora.
Back-Translation trains the source-to-target translation model~$p_\theta(\boldsymbol{y}\, | \,\boldsymbol{x})$ by maximizing the conditional log-likelihood of target language sentences~$\boldsymbol{y}$ given pseudo source sentences~$\boldsymbol{\tilde{x}}$ inferred by a pre-trained target-to-source translation model~$q_\phi(\boldsymbol{x}\, | \,\boldsymbol{y})$ given~$\boldsymbol{y}$.
IBT optimizes the dual translation models~$p_{\theta}(\boldsymbol{y}\, | \,\boldsymbol{x})$ and~$q_{\phi}(\boldsymbol{x}\, | \,\boldsymbol{y})$ via back-translation in turn, both for semi-supervised~\citep{ZhangLZC18,HoangKHC18,CotterellK18,NiuDC18} and unsupervised MT~\citep{ArtetxeLAC18,LampleCDR18,LampleOCDR18}.
Dual Learning takes the view of cooperative game theory where dual models collaborate with each other to learn to reconstruct the observed source and target monolingual sentences, and is widely used for semi-supervised~\citep{HeXQWYLM16}, unsupervised~\citep{WangXHTQZL2019}, and zero-shot multilingual NMT~\citep{SestorainCBH2018}. Concretely, Dual Learning optimizes~$p_\theta(\boldsymbol{y}\, | \,\boldsymbol{x})$ and~$q_\phi(\boldsymbol{x}\, | \,\boldsymbol{y})$ jointly
by reconstructing the original target sentence~$\boldsymbol{y}$ using~$p_\theta(\boldsymbol{y}\, | \,\boldsymbol{x})$ given the source~$\boldsymbol{\tilde{x}}$ inferred by~$q_\phi(\boldsymbol{x}\, | \,\boldsymbol{y})$, and vice versa. 
The reconstruction loss is augmented with a language model loss and used to update both reconstruction and inference models via policy gradient~\citep{Williams1992}.

While Dual Learning and IBT each improve BLEU over Back-Translation~\citep{ZhangLZC18,CotterellK18,HeXQWYLM16}, they have not been compared directly to each other. \citet{CotterellK18} interpret Back-Translation as a variational approximation where the pseudo source~$\boldsymbol{\tilde{x}}$ can be viewed as a latent variable and the target-to-source model~$q_{\phi}(\boldsymbol{x}\, | \,\boldsymbol{y})$ is an inference network that approximates the posterior distribution~$p_{\theta}(\boldsymbol{x}\, | \,\boldsymbol{y})$. Furthermore, they explain IBT as a way to better approximate the true posterior distribution with the target-to-source model. However, it is unclear how their heuristic objective relates to the ideal objective of maximizing the model's marginal likelihood of the target language monolingual data. More recently, \citet{HeWNB2020} connect back-translation and the language model loss in Dual Learning to the variational lower-bound (ELBO) of the marginal likelihood objective. We introduce a more direct connection.
\section{Theoretical View with Dual Reconstruction Objective}
\label{sec:theory}

\subsection{Variational Auto-Encoders for Semi-Supervised MT}
Following~\citet{CotterellK18}, we define a generative latent variable model of bitext
\[
    p_\theta(\boldsymbol{x}, \boldsymbol{y}) = p_\theta(\boldsymbol{y}\, | \,\boldsymbol{x}) q(\boldsymbol{x})
\]
where the source~$\boldsymbol{x}$ is randomly sampled from the prior distribution~$q(\boldsymbol{x})$ estimated by the empirical data distribution~$q_{data}(\boldsymbol{x})$ based on the abundant source monolingual data~$\mathcal{M}_X = \{\boldsymbol{x}^{(m)}\}_{m=1}^{M}$:
\[
    q_{data}(\boldsymbol{x}) =
    \begin{cases}
        \frac{1}{|\mathcal{M}_X|},& \text{if } \boldsymbol{x} \in \mathcal{M}_X \\
        0,              & \text{otherwise}
    \end{cases}
\]
and the target translation~$\boldsymbol{y}$ is sampled from the translation model~$p_\theta(\boldsymbol{y}\, | \,\boldsymbol{x})$ conditioned on~$\boldsymbol{x}$.

Given the target sentence distribution~$p(\boldsymbol{y})$ estimated by the empirical data distribution~$p_{data}(\boldsymbol{y})$ of target monolingual data~$\mathcal{M}_Y = \{\boldsymbol{y}^{(m)}\}_{m=1}^{M'}$, we can view~$\boldsymbol{x}$ as a latent variable and maximize the marginal log-likelihood
\[
    \mathcal{J}_{u}(\theta) = \e{\boldsymbol{y} \sim p(\boldsymbol{y})}{\log p_\theta(\boldsymbol{y})}
\]
where~$p_\theta(\boldsymbol{y})$ is the model's marginal likelihood~$p_\theta(\boldsymbol{y}) = \sum_{\boldsymbol{x}} p_\theta(\boldsymbol{x}, \boldsymbol{y})$.
The global optimum of the objective is achieved when the model's marginal distribution~$p_\theta(\boldsymbol{y})$ perfectly matches the target sentence distribution~$p(\boldsymbol{y})$.\footnote{We will define constraints to guarantee avoiding the uninteresting solution where $p_\theta(\boldsymbol{y}\, | \,\boldsymbol{x}) = p(\boldsymbol{y})$ in Section~\ref{sec:mi_constraint}.}

However, directly optimizing the marginal likelihood~$p_\theta(\boldsymbol{y})$ is intractable due to the infinite space of~$\boldsymbol{x}$. We can instead apply variational auto-encoding~(VAE) models by introducing an inference network~$p_{\psi}(\boldsymbol{x}\, | \,\boldsymbol{y})$ and maximize the variational lower-bound~(ELBO) of~$\log p_\theta(\boldsymbol{y})$:
\begin{equation}
\label{eq:elbo}
\begin{split}
    \log p_\theta(\boldsymbol{y}) \ge
    & \e{\boldsymbol{x} \sim p_{\psi}(\boldsymbol{x}\, | \,\boldsymbol{y})}{\log p_\theta(\boldsymbol{y}\, | \,\boldsymbol{x})} \\
    & - \kld{p_{\psi}(\boldsymbol{x}\, | \,\boldsymbol{y})}{q(\boldsymbol{x})} \\
\end{split}
\end{equation}
where~$\kld{p_\psi}{q}$ is the Kullback-Leibler (KL) divergence. However, estimating the prior distribution~$q(\boldsymbol{x})$ by the discrete data distribution~$q_{data}(\boldsymbol{x})$ makes it difficult to directly compute the KL term. One can estimate~$q(\boldsymbol{x})$ using a language model~(LM) trained to maximize the likelihood of the source monolingual data~\citep{Miao2016,Baziotis2019}, at the cost of introducing additional model bias into the translation model. The non-differentiable KL term requires gradient estimators such as policy gradient~\citep{Williams1992} or Gumbel-softmax~\citep{JangGP17}, which may introduce further training noise~\citep{HeWNB2020}.

To address these issues, we introduce the dual reconstruction objective, which includes two reconstruction terms that resemble the first term in the ELBO objective~(\cref{eq:elbo}) while excluding the KL term that is challenging to optimize and show that this objective has desirable properties and can be better approximated in practice.

\theoremstyle{definition}
\begin{definition}{}
Given prior distributions~$q(\boldsymbol{x})$ and~$p(\boldsymbol{y})$ over the sentences~$\boldsymbol{x}$ in the source language space~$\Sigma_x$ and~$\boldsymbol{y}$ in the target language space~$\Sigma_y$, we define the {\bf dual reconstruction objective}~$\mathcal{J}_{dual}(\theta, \phi)$ for dual translation models~$p_{\theta}(\boldsymbol{y}\, | \,\boldsymbol{x})$ and~$q_{\phi}(\boldsymbol{x}\, | \,\boldsymbol{y})$ as the sum of the target-source-target objective~$\mathcal{J}_1$ and source-target-source objective~$\mathcal{J}_2$:
\begin{equation}
\begin{split}
    & \mathcal{J}_{dual}(\theta, \phi) = \mathcal{J}_{1}(\theta, \phi) + \mathcal{J}_{2}(\theta, \phi) \\
    & \mathcal{J}_1(\theta, \phi) =
        \e{\boldsymbol{y} \sim p(\boldsymbol{y})} { \e{\boldsymbol{x} \sim q_\phi(\boldsymbol{x}\, | \,\boldsymbol{y})}{\log p_\theta(\boldsymbol{y}\, | \,\boldsymbol{x})} } \\
    & \mathcal{J}_2(\theta, \phi) =
        \e{\boldsymbol{x} \sim q(\boldsymbol{x})} { \e{\boldsymbol{y} \sim p_\theta(\boldsymbol{y}\, | \,\boldsymbol{x})}{\log q_\phi(\boldsymbol{x}\, | \,\boldsymbol{y})} }
\end{split}
\label{eq:dual_reconstruction}
\end{equation}
\end{definition}

For~$\mathcal{J}_1$, the target-to-source model~$q_\phi(\boldsymbol{x}\, | \,\boldsymbol{y})$ serves as the \textbf{inference model} to produce pseudo source sequences~$\boldsymbol{\tilde{x}}$ given target sequences~$\boldsymbol{y}$ and~$p_\theta(\boldsymbol{y}\, | \,\boldsymbol{x})$ serves as the \textbf{reconstruction model} to reconstruct~$\boldsymbol{y}$ given~$\boldsymbol{\tilde{x}}$, and vice versa for ~$\mathcal{J}_2$.
We first define the mutual information constraint in Section~\ref{sec:mi_constraint} and show in Section~\ref{sec:optimum} that $\mathcal{J}_{dual}(\theta, \phi)$ shares the same global optimum as the marginal likelihood objective which is intractable to optimize directly.\footnote{We focus on key components of the proof and leave detailed derivations for supplemental material.} In Section~\ref{sec:approx}, we compare and contrast how IBT and Dual Learning approximate $\mathcal{J}_{dual}(\theta, \phi)$.

\subsection{Mutual Information Constraint}
\label{sec:mi_constraint}

The global optimum of the marginal likelihood objective is achieved when the model's marginal distribution~$p_\theta(\boldsymbol{y}) = p(\boldsymbol{y})$. Given a translation model with enough capacity without any constraint on how the model output is dependent on the source context, this could lead to a degenerate solution~$p_\theta(\boldsymbol{y}\, | \,\boldsymbol{x}) = p(\boldsymbol{y})$ where the model ignores the source input and memorizes the monolingual training data. 
We constrain the translation model to avoid this situation, using the mutual information of a conditional distribution~$p_\theta(\boldsymbol{y}\, | \,\boldsymbol{x})$ which measures how much~$\boldsymbol{y}$ is dependent on~$\boldsymbol{x}$ in~$p_\theta$~\citep{HoffmanJ16}. Here, this mutual information measures the degree to which model translations depend on the source. 
\theoremstyle{definition}
\begin{definition}{}
Given a prior distribution~$q(\boldsymbol{x})$ over~$\boldsymbol{x} \in \Sigma_x$, we define the {\bf mutual information}~$\mi{p_\theta}$ of~$\boldsymbol{x}$ and~$\boldsymbol{y}$ in the conditional distribution~$p_\theta(\boldsymbol{y}\, | \,\boldsymbol{x})$:
\begin{equation}
\begin{split}
    \mi{p_\theta} = \e{\boldsymbol{x} \sim q(\boldsymbol{x})}{ \kld{ p_\theta(\boldsymbol{y}\, | \,\boldsymbol{x}) }{ p_\theta(\boldsymbol{y}) } }
\end{split}
\label{eq:mutual_info}
\end{equation}
where~$p_\theta(\boldsymbol{y})$ is the marginal distribution:
\begin{equation}
    p_\theta(\boldsymbol{y}) = \sum_{\boldsymbol{x}} p_{\theta}(\boldsymbol{y}\, | \,\boldsymbol{x}) q(\boldsymbol{x})
\end{equation}
\label{def:mutual_info}
\end{definition}

To avoid the degenerate solution, we constrain the model's mutual information by:
\[
    0 \le \mi{min} \le \mi{p_\theta} \le \mi{max} \le \max_{p \in P_{XY}} \mi{p}(\boldsymbol{x}; \boldsymbol{y})
\]
where~$\mi{min}$ and~$\mi{max}$ are pre-defined constant values between zero and the maximum mutual information between~$\boldsymbol{x}$ and~$\boldsymbol{y}$ given any joint distribution~$p(\boldsymbol{x}, \boldsymbol{y}) \in P_{XY}$ whose marginals satisfy~$\sum_{\boldsymbol{x}} p(\boldsymbol{x}, \boldsymbol{y}) = p(\boldsymbol{y})$ and~$\sum_{\boldsymbol{y}} p(\boldsymbol{x}, \boldsymbol{y}) = q(\boldsymbol{x})$.
\citet{Hledik2019MI} prove that the maximum mutual information~$\max_{p \in P_{XY}} \mi{p}(\boldsymbol{x}; \boldsymbol{y}) = \min(\h{}{q(\boldsymbol{x})}, \h{}{ p(\boldsymbol{y})})$, where~$\h{}{q(\boldsymbol{x})}$ and~$\h{}{ p(\boldsymbol{y})}$ are the entropy of prior distributions~$q(\boldsymbol{x})$ and~$p(\boldsymbol{y})$. Thus, the maximum mutual information should be large enough to properly bound the model's mutual information if~$q(\boldsymbol{x})$ and~$p(\boldsymbol{y})$ are defined on large monolingual corpora~$\mathcal{M}_X$ and~$\mathcal{M}_Y$.

Intuitively, the constraint requires that the model's mutual information cannot be so small that the model ignores the source context nor so large such that is not robust to the noise in the source input.
We will show in Section~\ref{sec:mi_analysis} that in practice, this constraint is met when jointly optimizing the supervised and unsupervised objectives without explicitly applying constrained optimization.

\subsection{Understanding the Global Optimum of the Dual Reconstruction Objective}
\label{sec:optimum}
We first characterize the upper bound of the dual reconstruction objective.

\begin{prop}
\label{prop}
Given prior distributions~$q(\boldsymbol{x})$ and~$p(\boldsymbol{y})$ over~$\boldsymbol{x} \in \Sigma_x$ and~$\boldsymbol{y} \in \Sigma_y$, if parameterized probability models~$p_\theta$ and~$q_\phi$ have enough capacity under the constraint that:
\begin{equation*}
\begin{split}
    & 0 \le \mi{min} \le \mi{p_\theta}, \mi{q_\phi} \le \mi{max} \le \max_{p \in P_{XY}} \mi{p}(\boldsymbol{x}; \boldsymbol{y}) \\
\end{split}
\end{equation*}
where~$\mi{min}$ and~$\mi{max}$ are pre-defined constant values between zero and the maximum mutual information between~$\boldsymbol{x}$ and~$\boldsymbol{y}$ given any joint distribution~$p(\boldsymbol{x}, \boldsymbol{y}) \in P_{XY}$ whose marginals satisfy~$\sum_{\boldsymbol{x}} p(\boldsymbol{x}, \boldsymbol{y}) = p(\boldsymbol{y})$ and~$\sum_{\boldsymbol{y}} p(\boldsymbol{x}, \boldsymbol{y}) = q(\boldsymbol{x})$. Then, the dual reconstruction objective is upper-bounded by~$\mathcal{J}_{dual}(\theta, \phi) \le 2\mi{max} - \h{}{q(\boldsymbol{x})} - \h{}{p(\boldsymbol{y})}$, and the upper bound is achieved iff
\begin{equation}
\label{eq:optcrit}
\begin{split}
    & \mi{q_{\phi}} = \mi{max} \\
    & \mi{p_{\theta}} = \mi{max} \\
    & p_{\theta}(\boldsymbol{y}\, | \,\boldsymbol{x})
        = \frac{q_{\phi}(\boldsymbol{x}\, | \,\boldsymbol{y})}{q_{\phi}(\boldsymbol{x})} p(\boldsymbol{y}) \\
    & q_{\phi}(\boldsymbol{x}\, | \,\boldsymbol{y})
        = \frac{p_{\theta}(\boldsymbol{y}\, | \,\boldsymbol{x})}{p_{\theta}(\boldsymbol{y})} q(\boldsymbol{x}) \\
\end{split}
\end{equation}
\end{prop}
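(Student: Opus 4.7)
My plan is to bound $\mathcal{J}_1$ and $\mathcal{J}_2$ separately by rewriting each as the expected log-likelihood of one conditional under a joint distribution induced by the other conditional. Each term will then be upper-bounded by a negative conditional entropy via non-negativity of KL; that conditional entropy decomposes as a marginal entropy minus a mutual information; and the MI constraint $\mi{q_\phi}, \mi{p_\theta} \le \mi{max}$ finishes the bound. Summing the two pieces will give $2\mi{max} - \h{}{q(\boldsymbol{x})} - \h{}{p(\boldsymbol{y})}$, and the four conditions in~\eqref{eq:optcrit} will correspond exactly to the four inequalities being tight.

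For $\mathcal{J}_1$, introduce the joint $r(\boldsymbol{x},\boldsymbol{y}) := q_\phi(\boldsymbol{x}\, | \,\boldsymbol{y})\, p(\boldsymbol{y})$, whose marginals are $r(\boldsymbol{y}) = p(\boldsymbol{y})$ and $r(\boldsymbol{x}) = q_\phi(\boldsymbol{x})$. Since $\mathcal{J}_1 = \e{(\boldsymbol{x},\boldsymbol{y}) \sim r}{\log p_\theta(\boldsymbol{y}\, | \,\boldsymbol{x})}$, adding and subtracting $\log r(\boldsymbol{y}\, | \,\boldsymbol{x})$ inside the expectation gives
\[
    \mathcal{J}_1 = -H_r(\boldsymbol{y}\, | \,\boldsymbol{x}) - \e{\boldsymbol{x} \sim r(\boldsymbol{x})}{\kld{r(\boldsymbol{y}\, | \,\boldsymbol{x})}{p_\theta(\boldsymbol{y}\, | \,\boldsymbol{x})}}.
\]
By non-negativity of KL, $\mathcal{J}_1 \le -H_r(\boldsymbol{y}\, | \,\boldsymbol{x}) = -\h{}{p(\boldsymbol{y})} + I_r(\boldsymbol{x};\boldsymbol{y})$. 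A direct unrolling shows $I_r(\boldsymbol{x};\boldsymbol{y}) = \mi{q_\phi}$ --- both reduce to $\e{\boldsymbol{y} \sim p(\boldsymbol{y})}{\kld{q_\phi(\boldsymbol{x}\, | \,\boldsymbol{y})}{q_\phi(\boldsymbol{x})}}$ --- and the MI constraint $\mi{q_\phi} \le \mi{max}$ yields $\mathcal{J}_1 \le \mi{max} - \h{}{p(\boldsymbol{y})}$, with equality iff (i) $p_\theta(\boldsymbol{y}\, | \,\boldsymbol{x}) = q_\phi(\boldsymbol{x}\, | \,\boldsymbol{y}) p(\boldsymbol{y}) / q_\phi(\boldsymbol{x})$ and (ii) $\mi{q_\phi} = \mi{max}$. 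The completely symmetric argument applied to $\mathcal{J}_2$ with $s(\boldsymbol{x},\boldsymbol{y}) := p_\theta(\boldsymbol{y}\, | \,\boldsymbol{x})\, q(\boldsymbol{x})$ gives $\mathcal{J}_2 \le \mi{max} - \h{}{q(\boldsymbol{x})}$, with equality iff $q_\phi(\boldsymbol{x}\, | \,\boldsymbol{y}) = p_\theta(\boldsymbol{y}\, | \,\boldsymbol{x}) q(\boldsymbol{x}) / p_\theta(\boldsymbol{y})$ and $\mi{p_\theta} = \mi{max}$. Summing the two bounds proves the upper bound and the ``only if'' direction.

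The step I expect to need the most care is the ``if'' direction: exhibiting a pair $(p_\theta, q_\phi)$ that realizes all four conditions simultaneously. I would do this constructively. Because $\mi{max} \le \max_{p \in P_{XY}} \mi{p}(\boldsymbol{x};\boldsymbol{y})$ and the mutual-information functional on $P_{XY}$ is continuous in the joint and vanishes at the independent coupling $q(\boldsymbol{x}) p(\boldsymbol{y})$, an intermediate-value argument along a line segment in the convex set $P_{XY}$ produces some $\pi^\star \in P_{XY}$ with marginals $q(\boldsymbol{x}),\, p(\boldsymbol{y})$ and $\mi{\pi^\star}(\boldsymbol{x};\boldsymbol{y}) = \mi{max}$. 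Setting $p_\theta(\boldsymbol{y}\, | \,\boldsymbol{x}) := \pi^\star(\boldsymbol{y}\, | \,\boldsymbol{x})$ and $q_\phi(\boldsymbol{x}\, | \,\boldsymbol{y}) := \pi^\star(\boldsymbol{x}\, | \,\boldsymbol{y})$ --- allowed by the capacity assumption --- both models induce $\pi^\star$ as their joint, so $p_\theta(\boldsymbol{y}) = p(\boldsymbol{y})$, $q_\phi(\boldsymbol{x}) = q(\boldsymbol{x})$, both MIs equal $\mi{max}$, and the two reciprocal identities collapse to Bayes' rule for $\pi^\star$. A useful consistency check is that the two identities in~\eqref{eq:optcrit}, taken together, already force $q_\phi(\boldsymbol{x}) = q(\boldsymbol{x})$ and $p_\theta(\boldsymbol{y}) = p(\boldsymbol{y})$: substituting one into the other yields $q_\phi(\boldsymbol{x})\, p_\theta(\boldsymbol{y}) = q(\boldsymbol{x})\, p(\boldsymbol{y})$, and separability pins down both marginals.
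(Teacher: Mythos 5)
Your decomposition of $\mathcal{J}_1$ and $\mathcal{J}_2$---rewriting each as a conditional entropy minus an expected KL and then invoking the mutual-information constraint---is exactly the paper's argument, just phrased via the joint $r(\boldsymbol{x},\boldsymbol{y})$ and conditional entropies rather than the paper's explicit posterior $Q(\boldsymbol{y}\, | \,\boldsymbol{x}) = q_\phi(\boldsymbol{x}\, | \,\boldsymbol{y})p(\boldsymbol{y})/q_\phi(\boldsymbol{x})$; the two are algebraically identical. Note, though, that the constructive step you flag as the hard part (exhibiting a pair $(p_\theta, q_\phi)$ that realizes all four conditions) is not actually required for Proposition~\ref{prop}, which only states an iff characterization of when the bound is attained; that existence claim is Proposition~\ref{prop:construct}, and the intermediate-value construction you sketch is essentially the paper's proof of that separate result.
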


\begin{proof}
First we prove that~$\mathcal{J}_1(\theta, \phi) \le \mi{max} - \h{}{p(\boldsymbol{y})}$, and the upper bound is achieved iff
\begin{equation*}
\begin{split}
    & \mi{q_{\phi}} = \mi{max} \\
    & p_{\theta}(\boldsymbol{y}\, | \,\boldsymbol{x})
        = \frac{q_{\phi}(\boldsymbol{x}\, | \,\boldsymbol{y})}{q_{\phi}(\boldsymbol{x})} p(\boldsymbol{y})
\end{split}
\end{equation*}

To show this, we denote the posterior distribution~$Q(\boldsymbol{y}\, | \,\boldsymbol{x}) = \frac{q_{\phi}(\boldsymbol{x}\, | \,\boldsymbol{y})}{q_{\phi}(\boldsymbol{x})} p(\boldsymbol{y})$, and rewrite~$\mathcal{J}_1$:
\begin{equation*}
\begin{split}
    \mathcal{J}_1
    =& \e{\boldsymbol{y} \sim p(\boldsymbol{y})} { \e{\boldsymbol{x} \sim q_\phi(\boldsymbol{x}\, | \,\boldsymbol{y})}{\log p_\theta(\boldsymbol{y}\, | \,\boldsymbol{x})} } \\
    =& \mi{q_\phi} - \h{}{p(\boldsymbol{y})} \\
    &- \kld{q_\phi(\boldsymbol{x}\, | \,\boldsymbol{y}) p(\boldsymbol{y})}{p_\theta(\boldsymbol{y}\, | \,\boldsymbol{x}) q_\phi(\boldsymbol{x})} \\
\end{split}
\end{equation*}
Since the KL divergence between two distributions is always non-negative and is zero iff they are equal, we have
\begin{equation*}
\begin{split}
    \mathcal{J}_1(\theta, \phi)
    \le \mi{q_\phi} - \h{}{p(\boldsymbol{y})}
    \le \mi{max} - \h{}{p(\boldsymbol{y})}
\end{split}
\end{equation*}
and~$\mathcal{J}_1(\theta, \phi) = \mi{max} - \h{}{p(\boldsymbol{y})}$ iff
\begin{equation*}
\begin{split}
    & \mi{q_{\phi}} = \mi{max} \\
    & \kld{q_{\phi}(\boldsymbol{x}\, | \,\boldsymbol{y}) p(\boldsymbol{y})}{p_{\theta}(\boldsymbol{y}\, | \,\boldsymbol{x}) q_{\phi}(\boldsymbol{x})} = 0 \\
\end{split}
\end{equation*}
The second equality holds iff
\begin{equation*}
    p_{\theta}(\boldsymbol{y}\, | \,\boldsymbol{x})
    = \frac{q_{\phi}(\boldsymbol{x}\, | \,\boldsymbol{y})}{q_{\phi}(\boldsymbol{x})} p(\boldsymbol{y})
\end{equation*}

Similarly, we can prove that~$\mathcal{J}_2(\theta, \phi) \le \mi{max} - \h{}{q(\boldsymbol{x})}$, and the upper bound is achieved iff
\begin{equation*}
\begin{split}
    & \mi{p_{\theta}} = \mi{max} \\
    & q_{\phi}(\boldsymbol{x}\, | \,\boldsymbol{y})
        = \frac{p_{\theta}(\boldsymbol{y}\, | \,\boldsymbol{x})}{p_{\theta}(\boldsymbol{y})} q(\boldsymbol{x}) \\
\end{split}
\end{equation*}
thus~$\mathcal{J}_{dual}(\theta, \phi) \le 2\mi{max} - \h{}{q(\boldsymbol{x})} - \h{}{p(\boldsymbol{y})}$ and the upper bound is achieved iff~$\theta$ and~$\phi$ satisfy~\cref{eq:optcrit}, concluding the proof.
\end{proof}

\cref{prop} shows that~$\mathcal{J}_{dual}(\theta, \phi)$ has an upper bound that could be reached when the mutual information of~$p_\theta(\boldsymbol{y}\, | \,\boldsymbol{x})$ and~$q_\phi(\boldsymbol{x}\, | \,\boldsymbol{y})$ are maximized, and~$p_\theta(\boldsymbol{y}\, | \,\boldsymbol{x})$ and~$q_\phi(\boldsymbol{x}\, | \,\boldsymbol{y})$ are equal to the posterior distribution for each other.
Next we show that the upper bound is indeed the global maximum of the objective~$\mathcal{J}_{dual}(\theta, \phi)$, as there exists a solution for the above conditions~(proof in Appendix~\ref{sec:appendix:proof_2}).

\begin{prop}
\label{prop:construct}
Given distributions~$q(\boldsymbol{x})$ and~$p(\boldsymbol{y})$ over~$\boldsymbol{x} \in \Sigma_x$ and~$\boldsymbol{y} \in \Sigma_y$, if parameterized probability models~$p_\theta$ and~$q_\phi$ have enough capacity under the constraint that:
\begin{equation}
\label{eq:constraint}
\begin{split}
    & 0 \le \mi{min} \le \mi{p_\theta}, \mi{q_\phi} \le \mi{max} \le \max_{p \in P_{XY}} \mi{p}(\boldsymbol{x}; \boldsymbol{y}) \\
\end{split}
\end{equation}
where~$\mi{min}$ and~$\mi{max}$ are pre-defined constant values between zero and the maximum mutual information between~$\boldsymbol{x}$ and~$\boldsymbol{y}$ given any joint distribution~$p(\boldsymbol{x}, \boldsymbol{y}) \in P_{XY}$ whose marginals satisfy~$\sum_{\boldsymbol{x}} p(\boldsymbol{x}, \boldsymbol{y}) = p(\boldsymbol{y})$ and~$\sum_{\boldsymbol{y}} p(\boldsymbol{x}, \boldsymbol{y}) = q(\boldsymbol{x})$.
Then there exist~$\theta^*$ and~$\phi^*$ such that:
\begin{equation}
\label{eq:crit}
\begin{split}
    & \mi{q_{\phi^*}} = \mi{p_{\theta^*}} = \mi{max} \\
    & p_{\theta^*}(\boldsymbol{y}\, | \,\boldsymbol{x})
        = \frac{q_{\phi^*}(\boldsymbol{x}\, | \,\boldsymbol{y})}{q_{\phi^*}(\boldsymbol{x})} p(\boldsymbol{y}) \\
    & q_{\phi^*}(\boldsymbol{x}\, | \,\boldsymbol{y})
        = \frac{p_{\theta^*}(\boldsymbol{y}\, | \,\boldsymbol{x})}{p_{\theta^*}(\boldsymbol{y})} q(\boldsymbol{x}) \\
\end{split}
\end{equation}
\end{prop}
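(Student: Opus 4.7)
The plan is constructive: reduce the problem to exhibiting a single joint distribution $p^*(\boldsymbol{x}, \boldsymbol{y})$ whose $\boldsymbol{x}$-marginal is $q$, whose $\boldsymbol{y}$-marginal is $p$, and whose mutual information equals $\mi{max}$ exactly; the two parameterized conditionals are then taken to be the Bayes conditionals of $p^*$. Concretely, I would set $p_{\theta^*}(\boldsymbol{y}\, | \,\boldsymbol{x}) := p^*(\boldsymbol{x},\boldsymbol{y})/q(\boldsymbol{x})$ and $q_{\phi^*}(\boldsymbol{x}\, | \,\boldsymbol{y}) := p^*(\boldsymbol{x},\boldsymbol{y})/p(\boldsymbol{y})$, and appeal to the stated capacity assumption so that some $\theta^*, \phi^*$ realize these conditionals.

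To build $p^*$, observe that the feasible set $P_{XY}$ is convex and non-empty: it contains the product $q(\boldsymbol{x}) p(\boldsymbol{y})$, which has zero mutual information, and the supremum $\max_{p \in P_{XY}} \mi{p}(\boldsymbol{x}; \boldsymbol{y})$ is attained by some $p_{\sup} \in P_{XY}$, since $P_{XY}$ is compact on the finite support inherited from the empirical distributions and mutual information is continuous on the probability simplex. I would then consider the convex path $p_\alpha = \alpha\, p_{\sup} + (1-\alpha)\, q(\boldsymbol{x}) p(\boldsymbol{y})$ for $\alpha \in [0,1]$. Because the marginal constraints are linear, each $p_\alpha$ lies in $P_{XY}$; its mutual information varies continuously from $0$ at $\alpha = 0$ to the supremum at $\alpha = 1$, so by the intermediate value theorem there is some $\alpha^*$ with $\mi{p_{\alpha^*}}(\boldsymbol{x}; \boldsymbol{y}) = \mi{max}$, since $\mi{max}$ lies in this interval by the assumption in~\cref{eq:constraint}. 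Set $p^* := p_{\alpha^*}$.

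With $p^*$ in hand, verifying \cref{eq:crit} is immediate. The induced model marginals are $p_{\theta^*}(\boldsymbol{y}) = \sum_{\boldsymbol{x}} p^*(\boldsymbol{x},\boldsymbol{y}) = p(\boldsymbol{y})$ and $q_{\phi^*}(\boldsymbol{x}) = \sum_{\boldsymbol{y}} p^*(\boldsymbol{x},\boldsymbol{y}) = q(\boldsymbol{x})$ because $p^* \in P_{XY}$, so substituting these into the identity $p^*(\boldsymbol{x},\boldsymbol{y}) = p_{\theta^*}(\boldsymbol{y}\, | \,\boldsymbol{x}) q(\boldsymbol{x}) = q_{\phi^*}(\boldsymbol{x}\, | \,\boldsymbol{y}) p(\boldsymbol{y})$ and solving for one conditional in terms of the other recovers both Bayes-style equalities in \cref{eq:crit}. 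The mutual-information conditions follow because $\mi{p_{\theta^*}}$ as defined in~\cref{eq:mutual_info} equals the mutual information of the joint built from $p_{\theta^*}$ and marginal $q$, which is exactly $p^*$, and analogously for $\mi{q_{\phi^*}}$; both therefore equal $\mi{max}$.

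The main obstacle is the intermediate-value step, specifically justifying continuity of mutual information along the convex path and ensuring that the Bayes conditionals derived from $p^*$ are well-defined probability distributions. Continuity is standard on the finite-support probability simplex inherited from $\mathcal{M}_X$ and $\mathcal{M}_Y$, and well-definedness follows because the convex combination $p^*$ automatically inherits the full marginal supports of $q$ and $p$, so the denominators $q(\boldsymbol{x})$ and $p(\boldsymbol{y})$ never vanish where a value is needed. A minor subtlety is that the capacity assumption must permit arbitrary conditional distributions on the finite alphabets of $\mathcal{M}_X$ and $\mathcal{M}_Y$, which is a mild requirement routinely satisfied by modern NMT architectures.
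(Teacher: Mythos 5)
Your proof takes essentially the same route as the paper's: pick a joint $p^* \in P_{XY}$ with $\mi{p^*}(\boldsymbol{x};\boldsymbol{y}) = \mi{max}$, define the two models as its Bayes conditionals $p_{\theta^*}(\boldsymbol{y}\,|\,\boldsymbol{x}) = p^*(\boldsymbol{x},\boldsymbol{y})/q(\boldsymbol{x})$ and $q_{\phi^*}(\boldsymbol{x}\,|\,\boldsymbol{y}) = p^*(\boldsymbol{x},\boldsymbol{y})/p(\boldsymbol{y})$, check that the induced marginals collapse back to $q$ and $p$, and then read off the mutual-information and Bayes-consistency conditions in~\cref{eq:crit}. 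The one place you do genuinely more work is the existence of $p^*$: the paper simply asserts that since $\mi{max}$ sits between $\min_{p \in P_{XY}} \mi{p} = 0$ and $\max_{p \in P_{XY}} \mi{p}$, some $p^* \in P_{XY}$ attains it, whereas you justify this by exhibiting the convex path $p_\alpha = \alpha\, p_{\sup} + (1-\alpha)\, q(\boldsymbol{x}) p(\boldsymbol{y})$, noting convexity of $P_{XY}$ under the linear marginal constraints, continuity of mutual information on the finite-support simplex, and the intermediate value theorem. That fills a small but real gap the paper leaves implicit, and it is the right argument; the rest of the verification (marginals, $\mi{p_{\theta^*}} = \mi{q_{\phi^*}} = \mi{p^*} = \mi{max}$, and the two Bayes identities obtained by rearranging $p^*(\boldsymbol{x},\boldsymbol{y}) = p_{\theta^*}(\boldsymbol{y}\,|\,\boldsymbol{x}) q(\boldsymbol{x}) = q_{\phi^*}(\boldsymbol{x}\,|\,\boldsymbol{y}) p(\boldsymbol{y})$) matches the paper line for line.
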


Finally, we connect the global optimum of the dual reconstruction objective to that of the marginal likelihood objective~(proof in Appendix~\ref{sec:appendix:theorem}).
\begin{theorem}
\label{theorem}
Given prior distributions~$q(\boldsymbol{x})$ and~$p(\boldsymbol{y})$ over~$\boldsymbol{x} \in \Sigma_x$ and~$\boldsymbol{y} \in \Sigma_y$, if parameterized probability models~$p_\theta$ and~$q_\phi$ have enough capacity under the constraint that:
\begin{equation*}
\begin{split}
    & 0 \le \mi{min} \le \mi{p_\theta}, \mi{q_\phi} \le \mi{max} \le \max_{p \in P_{XY}} \mi{p}(\boldsymbol{x}; \boldsymbol{y}) \\
\end{split}
\end{equation*}
where~$\mi{min}$ and~$\mi{max}$ are pre-defined constant values between zero and the maximum mutual information between~$\boldsymbol{x}$ and~$\boldsymbol{y}$ given any joint distribution~$p(\boldsymbol{x}, \boldsymbol{y}) \in P_{XY}$ whose marginals satisfy~$\sum_{\boldsymbol{x}} p(\boldsymbol{x}, \boldsymbol{y}) = p(\boldsymbol{y})$ and~$\sum_{\boldsymbol{y}} p(\boldsymbol{x}, \boldsymbol{y}) = q(\boldsymbol{x})$. Let $\theta^*, \phi^*$ be the global optimum of the dual reconstruction objective~$\max_{\theta, \phi} \mathcal{J}_{dual}(\theta, \phi)$, then~$q_{\phi^*}(\boldsymbol{x}) = q(\boldsymbol{x})$,~$p_{\theta^*}(\boldsymbol{y}) = p(\boldsymbol{y})$, and~$\mi{q_{\phi^*}} = \mi{p_{\theta^*}} = \mi{max}$.
\end{theorem}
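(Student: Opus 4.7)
The strategy is to chain Propositions~\ref{prop} and~\ref{prop:construct}. Proposition~\ref{prop:construct} supplies parameters $(\theta^*, \phi^*)$ satisfying the conditions in \cref{eq:crit}, which are precisely the optimality conditions \cref{eq:optcrit} of Proposition~\ref{prop}; hence the upper bound $2\mi{max} - \h{}{q(\boldsymbol{x})} - \h{}{p(\boldsymbol{y})}$ is attained and is the global maximum of $\mathcal{J}_{dual}$. Conversely, the iff clause of Proposition~\ref{prop} forces every global maximizer $(\theta^*, \phi^*)$ to satisfy \cref{eq:optcrit}. The equality $\mi{q_{\phi^*}} = \mi{p_{\theta^*}} = \mi{max}$ is therefore immediate.

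To deduce the marginal identities, the plan is to combine the two Bayes-like relations in \cref{eq:optcrit}. Substituting $q_{\phi^*}(\boldsymbol{x}\, | \,\boldsymbol{y}) = \frac{p_{\theta^*}(\boldsymbol{y}\, | \,\boldsymbol{x})}{p_{\theta^*}(\boldsymbol{y})} q(\boldsymbol{x})$ into $p_{\theta^*}(\boldsymbol{y}\, | \,\boldsymbol{x}) = \frac{q_{\phi^*}(\boldsymbol{x}\, | \,\boldsymbol{y})}{q_{\phi^*}(\boldsymbol{x})} p(\boldsymbol{y})$ and canceling the common factor $p_{\theta^*}(\boldsymbol{y}\, | \,\boldsymbol{x})$ on pairs where it is positive yields
\[
    p_{\theta^*}(\boldsymbol{y})\, q_{\phi^*}(\boldsymbol{x}) = p(\boldsymbol{y})\, q(\boldsymbol{x}).
\]
Summing this identity over $\boldsymbol{y}$ gives $q_{\phi^*}(\boldsymbol{x}) = q(\boldsymbol{x})$ and summing over $\boldsymbol{x}$ gives $p_{\theta^*}(\boldsymbol{y}) = p(\boldsymbol{y})$, as required.

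The main subtlety I anticipate lies in justifying the cancellation step off the support of $p_{\theta^*}(\boldsymbol{y}\, | \,\boldsymbol{x})$. Wherever that factor vanishes, \cref{eq:optcrit} itself forces both sides of the derived product identity to agree trivially, and because the priors $q(\boldsymbol{x})$ and $p(\boldsymbol{y})$ are supported on the finite corpora $\mathcal{M}_X$ and $\mathcal{M}_Y$, the mutual information bound $\mi{p_{\theta^*}}, \mi{q_{\phi^*}} = \mi{max}$ rules out the degenerate situation where an entire slice of the support vanishes. This should be a routine case analysis rather than the substantive part of the proof, which really lives in the earlier propositions.
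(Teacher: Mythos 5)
Your overall architecture mirrors the paper's exactly: invoke Proposition~\ref{prop} for the iff characterization of the upper bound, Proposition~\ref{prop:construct} to show that bound is attained (so the global maximum exists and equals it), and then turn the two Bayes-type relations in \cref{eq:optcrit} into the product identity $p_{\theta^*}(\boldsymbol{y})\,q_{\phi^*}(\boldsymbol{x}) = p(\boldsymbol{y})\,q(\boldsymbol{x})$. Where you depart, modestly, is the final normalization step: the paper packages it as an auxiliary Lemma~1, whose proof fixes a base point $(x_0,y_0)$ of nonvanishing mass and derives the conclusion via ratios before summing; you instead sum the product identity directly over $\boldsymbol{y}$ and over $\boldsymbol{x}$ and use $\sum_{\boldsymbol{y}} p_{\theta^*}(\boldsymbol{y}) = \sum_{\boldsymbol{y}} p(\boldsymbol{y}) = 1$ (and likewise in $\boldsymbol{x}$). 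That is a genuine simplification — it inlines Lemma~1 into one line and does not need to first argue for the existence of a common nonzero base point. Both arguments need the product identity to hold for \emph{all} $(\boldsymbol{x},\boldsymbol{y})$, not just on the support of $p_{\theta^*}(\boldsymbol{y}\,|\,\boldsymbol{x})$, so the support caveat you raise is real; the paper silently skips it (``multiply the last two equations'' and then apply Lemma~1 as if the identity were universal), while you at least flag it. However, your proposed patch is not quite right as stated: if $p_{\theta^*}(\boldsymbol{y}\,|\,\boldsymbol{x}) = 0$, \cref{eq:optcrit} gives you $q_{\phi^*}(\boldsymbol{x}\,|\,\boldsymbol{y}) = 0$, not the product identity — the two sides do not ``trivially agree,'' you simply cannot cancel. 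The clean fix is to note that for each fixed $\boldsymbol{x}$ with $q(\boldsymbol{x}) > 0$ the ratio $p_{\theta^*}(\boldsymbol{y})/p(\boldsymbol{y})$ is constant on the support of $p_{\theta^*}(\cdot\,|\,\boldsymbol{x})$, and that this support must exhaust the support of $p_{\theta^*}(\boldsymbol{y})$ (otherwise the marginal would not normalize), so the summation still goes through. That is a small repair, and the underlying idea of your proof is sound and matches the paper's.
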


Thus, while the marginal likelihood objective provides no guarantee for the model's mutual information, the global optimum of dual reconstruction objective guarantees that the mutual information of translation models~$p_\theta(\boldsymbol{y}\, | \,\boldsymbol{x})$ and~$q_\phi(\boldsymbol{x}\, | \,\boldsymbol{y})$ will be maximized to~$\mi{max}$.

\subsection{Practical Approximations}
\label{sec:approx}

Despite its desirable optimum, the dual reconstruction objective cannot be directly optimized since decoding is not differentiable. We compare how it is approximated by IBT vs. Dual Learning.
 
\paragraph{Gradient Approximation} To estimate the dual reconstruction objective, one could use sampling or beam search from the model distribution. However, since neither approach is differentiable, the gradients~$\nabla_\theta \mathcal{J}_2$ and~$\nabla_\phi \mathcal{J}_1$ cannot be computed directly. IBT blocks the gradients~$\nabla_\theta \mathcal{J}_2$ and~$\nabla_\phi \mathcal{J}_1$ assuming that they are negligible, while Dual Learning approximates them by policy gradient~\citep{Williams1992}, which can lead to slow and unstable training~\citep{HendersonIBPPM18,WuTQLL18}.
\cref{prop} shows that the objective is maximized when the mutual information is maximized to~$I_{max}$.
Thus, maximizing the mutual information by other means can help side-step this issue. For example, combining the supervised and unsupervised training objectives \citep{SennrichHB16,CotterellK18} to train models jointly on the parallel and monolingual data can help.
For unsupervised MT, the denoising auto-encoding objective introduced in \citet{LampleCDR18} can be viewed as a way to maximize the mutual information.

\paragraph{LM Loss} Dual Learning combines the dual reconstruction objective with an LM loss to encourage the generated translations to be close to the target language domain. \cref{theorem} suggests that the LM loss is redundant: optimizing the dual reconstruction objective implicitly pushes the output distributions of the source-to-target and target-to-source models toward the target and source language distributions respectively, which has the same effect intended by the LM loss.

\paragraph{Optimization Strategy} 
\looseness=-1
While Dual Learning uses batch-level updates, where back-translations are generated on-the-fly and the translation models~$p_\theta$ and~$q_\phi$ are updated alternately in data batches, IBT adopts different strategies based on the data settings. Batch-level IBT is used in unsupervised MT to quickly boost the model performance from a cold start~\citep{ArtetxeLAC18,LampleCDR18}, while epoch-level IBT is used in semi-supervised MT, where a fixed model~$p_\theta$ is used to back-translate the entire monolingual corpus to train~$q_\phi$ until convergence and vice-versa for~$p_\theta$~\citep{ZhangLZC18}.
 
\paragraph{Summary} This theoretical analysis suggests that the dual reconstruction objective is a good alternative to the intractable marginal likelihood objective, and that IBT approximates it more closely than the more complex Dual Learning objective. However, we do not know whether the Dual Reconstruction optimum is reached in practice. We therefore conduct an extensive empirical study to determine whether the differences in approximations made by IBT and Dual Learning matter.

\begin{table*}
\centering
\begin{tabular}{lcccrrcc}
\toprule
\textbf{Task} & Lang. & \multicolumn{2}{c}{Parallel Data} & \multicolumn{2}{c}{Mono. Data} & {Validation} & {Test} \\\hline
{\bf high-resource} & {de-en} & {News} & 4.5M & {News}  & {5.0M} & {newstest15} & {newstest16-18} \\
{\bf low-resource} & {tr-en} & {News} & {0.2M}& {News}  & {0.8M} & {newstest16} & {newstest17-18} \\
{\bf cross-domain} & {de-en} & {News} & {4.5M} & {TED} & {0.5M} & {iwslt-test14} & {iwslt-test15-17} \\\hline
\end{tabular}
\caption{The empirical comparison spans three data conditions (and both translation directions). We report provenance and the number of sentences in parallel and monolingual training data, as well as validation and test sets for each setting. Monolingual data are randomly selected from \textit{``News Crawl: articles from 2015''} for German$\leftrightarrow$English and \textit{``News Crawl: articles from 2017''} for Turkish$\leftrightarrow$English, and TED talks data for TED.}
\label{tab:exp_settings}
\end{table*}

\section{Empirical Study}
\label{sec:experiments}

\begin{table*}[hp]
\begin{subtable}{.9\linewidth}
\begin{tabular}{lcrp{0.07\textwidth}cccp{0.06\textwidth}ccc}
\toprule
\multirow{2}{*}{\bf Low-Resource} & \multirow{2}{*}{$\alpha_{LM}$} & \multirow{2}{*}{hours} & \multicolumn{4}{c}{tr-en \abr{bleu}} & \multicolumn{4}{c}{en-tr \abr{bleu}} \\
& & & & {2017} & {2018} & {Avg} & & {2017} & {2018} & {Avg} \\
\midrule
{baseline} & {--} & {8.0} & & {15.14} & {15.95} & {15.55} & & {11.17} & {10.18} & {10.68} \\
\midrule
{epoch-level \abr{ibt}-1} & {--} & {86.1} & & {16.36} & {16.44} & {16.40} & & {15.08} & {12.98} & {\bf 14.03} \\
{epoch-level \abr{ibt}-2} & {--} & {162.2} & & {19.12} & {19.63} & {\bf 19.38} & & {14.94} & {12.53} & {13.74} \\
{epoch-level \abr{ibt}-3} & {--} & {237.5} & & {18.76} & {19.01} & {18.89} & & {15.04} & {12.93} & {13.99} \\
{batch-level \abr{ibt}} & {--} & {160.6} & & {17.18} & {18.08} & {17.63} & & {13.90} & {11.84} & {12.87} \\
{Dual Learning} & {0.0} & {313.2} & & {17.07} & {18.00} & {17.54} & & {14.17} & {11.91} & {13.04} \\
{Dual Learning} & {0.1} & {257.8} & & {17.09} & {17.62} & {17.36} & & {13.88} & {11.49} & {12.69} \\
{Dual Learning} & {0.5} & {421.2} & & {17.33} & {18.36} & {17.85} & & {14.54} & {12.30} & {13.42} \\
\end{tabular}
\label{tab:low_bleu}
\end{subtable}

\begin{subtable}{.9\linewidth}
\begin{tabular}{lcrcccccccc}
\toprule
\multirow{2}{*}{\bf High-Resource} & \multirow{2}{*}{$\alpha_{LM}$} & \multirow{2}{*}{hours} & \multicolumn{4}{c}{de-en \abr{bleu}} & \multicolumn{4}{c}{en-de \abr{bleu}} \\
& & & {2016} & {2017} & {2018} & {Avg} & {2016} & {2017} & {2018} & {Avg} \\
\midrule
{baseline} & {--} & {26.7} & {31.95} & {27.74} & {34.59} & {31.43} & {29.18} & {23.46} & {34.53} & {29.06} \\
\midrule
{epoch-level \abr{ibt}-1} & {--} & {439.0} & {32.59} & {28.46} & {35.22} & {32.09} & {30.13} & {23.87} & {35.35} & {\bf 29.78} \\
{epoch-level \abr{ibt}-2} & {--} & {850.9} & {33.64} & {29.13} & {36.37} & {\bf 33.05} & {29.99} & {24.42} & {35.60} & {\bf 30.00} \\
{epoch-level \abr{ibt}-3} & {--} & {1261.6} & {33.43} & {29.07} & {36.17} & {32.89} & {29.93} & {24.24} & {35.46} & {\bf 29.88} \\
{batch-level \abr{ibt}} & {--} & {94.0} & {32.95} & {28.65} & {35.24} & {32.28} & {29.70} & {23.78} & {34.89} & {29.46} \\
{Dual Learning} & {0.0} & {128.2} & {32.79} & {28.47} & {35.10} & {32.12} & {29.37} & {23.50} & {34.67} & {29.18} \\
{Dual Learning} & {0.1} & {93.3} & {32.63} & {28.47} & {34.88} & {31.99} & {29.38} & {23.79} & {34.71} & {29.29} \\
{Dual Learning} & {0.5} & {152.1} & {32.89} & {28.69} & {35.32} & {32.30} & {29.58} & {23.65} & {34.88} & {29.37} \\
\end{tabular}
\label{tab:high_bleu}
\end{subtable}

\begin{subtable}{.9\linewidth}
\begin{tabular}{lcrcccccccc}
\toprule
\multirow{2}{*}{\bf Cross-Domain} & \multirow{2}{*}{$\alpha_{LM}$} & \multirow{2}{*}{hours} & \multicolumn{4}{c}{de-en \abr{bleu}} & \multicolumn{4}{c}{en-de \abr{bleu}} \\
& & & {2015} & {2016} & {2017} & {Avg} & {2015} & {2016} & {2017} & {Avg} \\
\midrule
{baseline} & {--} & {26.2} & {27.11} & {27.37} & {23.65} & {26.04} & {26.35} & {23.10} & {21.69} & {23.71} \\
\midrule
{epoch-level \abr{ibt}-1} & {--} & {71.1} & {28.88} & {28.73} & {25.37} & {\bf 27.66} & {26.69} & {24.02} & {22.59} & {24.43} \\
{epoch-level \abr{ibt}-2} & {--} & {115.0} & {28.70} & {28.72} & {25.37} & {\bf 27.60} & {27.57} & {24.50} & {22.78} & {\bf 24.95} \\
{epoch-level \abr{ibt}-3} & {--} & {159.8} & {29.13} & {29.00} & {25.33} & {\bf 27.82} & {27.31} & {24.37} & {22.92} & {\bf 24.87} \\
{batch-level \abr{ibt}} & {--} & {45.0} & {28.03} & {27.78} & {24.53} & {26.78} & {26.84} & {23.64} & {22.35} & {24.28} \\
{Dual Learning} & {0.0} & {65.8} & {28.04} & {27.73} & {24.36} & {26.71} & {26.70} & {23.85} & {22.21} & {24.25} \\
{Dual Learning} & {0.1} & {59.3} & {27.77} & {27.84} & {24.51} & {26.71} & {26.99} & {23.86} & {22.59} & {24.48} \\
{Dual Learning} & {0.5} & {92.7} & {27.84} & {28.00} & {24.18} & {26.67} & {27.23} & {24.08} & {22.72} & {24.68} \\
\bottomrule
\end{tabular}
\label{tab:xdomain_bleu}
\end{subtable}
\caption{BLEU scores and total training time~(\textit{hours}) on the low-resource, high-resource, and cross-domain tasks. \textit{epoch-level} \abr{ibt}-1, \abr{ibt}-2, and \abr{ibr}-3 denotes models fine-tuned with IBT for 1--3 iterations, and~$\alpha_{LM}$ denotes the weight for the LM loss. We boldface the highest average scores and their ties based on the significance test. Overall, epoch-level IBT outperforms all other methods at the cost of much longer training time.}
\label{tab:main_result}
\end{table*}

We evaluate on six translation tasks (Table~\ref{tab:exp_settings}), including  German$\leftrightarrow$English~(de-en), Turkish$\leftrightarrow$English~(tr-en) from WMT18~\citep{BojarFFGHHKM18}, and a cross-domain task which tests de-en models trained on WMT data on the TED test sets from IWSLT17~\citep{IWSLT17}.\footnote{We exclude Rapid and ParaCrawl corpora as they are noisy and thus require data filtering~\cite{Morishita2018}.} 

\subsection{Model and Training Configuration}
We adopt the base Transformer model~\citep{Vaswani2017}.
We pre-train models with the supervised objective until convergence, and fine-tune on the mixed parallel and monolingual data as in prior work~\citep{SennrichHB16,CotterellK18}. We use the Adam optimizer~\citep{KingmaB15} with a batch size of~$32$ sentences and checkpoint the model every~$2500$ updates. 
At decoding time, we use beam search with a beam size of~$5$. 
The LMs in Dual Learning are RNNs~\citep{MikolovKBCK10} with~$512$ hidden units. All model and training details are in Appendix~\ref{sec:appendix:exp}.

For preprocessing, we normalize punctuations and apply tokenization, true-casing, and joint source-target Byte Pair Encoding~\citep{SennrichHB16bpe} with~$32,000$ operations. We set the maximum sentence length to~$50$.

\subsection{Baselines and Evaluation}
Our experiments are based on strong supervised baselines.\footnote{de-en: 2--4 BLEU higher than the baseline of~\citet{Morishita2018}; tr-en: on par or higher than the baseline of~\citet{LIUM17}.} We compare semi-supervised models that are fine-tuned with Back-Translation, epoch-level and batch-level IBT, and Dual Learning with varying interpolation weights~$\alpha_{LM} = \{0, 0.1, 0.5\}$ for the LM loss.\footnote{By contrast, prior work only reports results for ~$\alpha_{LM} = 0.005$~\citep{HeXQWYLM16}. Our preliminary result show that~$\alpha_{LM} = 0.005$ obtains similar results to~$\alpha_{LM} = 0$.} 
Following \citet{HeXQWYLM16}, we use beam search with a beam size of~$2$ for inference in Dual Learning and IBT. 

We evaluate translation quality using sacreBLEU\footnote{Version: BLEU+case.mixed+numrefs.1+smooth.exp+\\tok.13a+version.1.2.11} and total training time in hours. We also show learning curves  for the approximated dual reconstruction loss~(negative of the dual reconstruction objective in \cref{eq:dual_reconstruction}, averaged over the training batches from both directions).

\subsection{Findings}

\paragraph{Overview} 
\looseness=-1
All semi-supervised training techniques improve translation quality over the supervised-only baseline (Table~\ref{tab:main_result}). The first iteration of IBT (i.e. Back-Translation) on monolingual data improves over the baseline significantly\footnote{All mentions of statistical significance are based on a paired Student's t-test with $p <0.05$.} by 0.7--3.4 BLEU. IBT is more effective in the direction where the model in the opposite direction is most improved by Back-Translation 
For example, in the high and low resource tasks where Back-Translation improves over the baseline more when translating out of English, the best performing IBT model significantly improves BLEU over Back-Translation when translating into English, but not in the other direction. In the cross-domain scenario where Back-Translation improves more on de-en, IBT outperforms Back-Translation on en-de, but the improvement is not significant in the other direction.

\paragraph{Impact of Policy Gradient}
Updating the inference model via policy gradient fails to lower the dual reconstruction loss and has little impact on BLEU. We compare Dual Learning (with $\alpha_{LM} = 0$) to batch-level IBT, so that the only difference between the two approaches is whether the inference model is updated. Batch-level IBT achieves similar or higher BLEU than Dual Learning for all tasks, except for the low-resource en-tr task where the BLEU difference is small~($<0.2$).
In addition, batch-level IBT trains ~30--50\% faster than Dual Learning. Figure~\ref{fig:rec_curve} shows that the policy gradient update has little impact on the dual reconstruction loss on all tasks.

\paragraph{Impact of LM}
The best Dual Learning BLEU is obtained with~$\alpha_{LM} = 0.5$ on all tasks except for de-en in the cross-domain setting (Table~\ref{tab:main_result}). However, it brings only small BLEU improvements~(0.2--0.4) over Dual Learning without LM loss~($\alpha_{LM} > 0$), but causes the dual reconstruction loss to decrease slower~(Figure~\ref{fig:rec_curve}), and slows down training by~20--40\%. In all cases, IBT outperforms Dual Learning.

\paragraph{Epoch vs. Batch IBT} 
The best epoch-level IBT model outperforms batch-level IBT by~0.5--1.8 BLEU overall, at the cost of much slower training:~13 times longer in the high-resource setting,~1.5 times longer in the low-resource setting, and~3.5 times longer in the cross-domain setting. Running IBT for two iterations is a good choice to balance training efficiency and translation quality, as the third iteration does not help BLEU.

\begin{figure}[!t]
    \centering
    \begin{subfigure}[b]{0.4\textwidth}
        \centering
        \includegraphics[width=\textwidth]{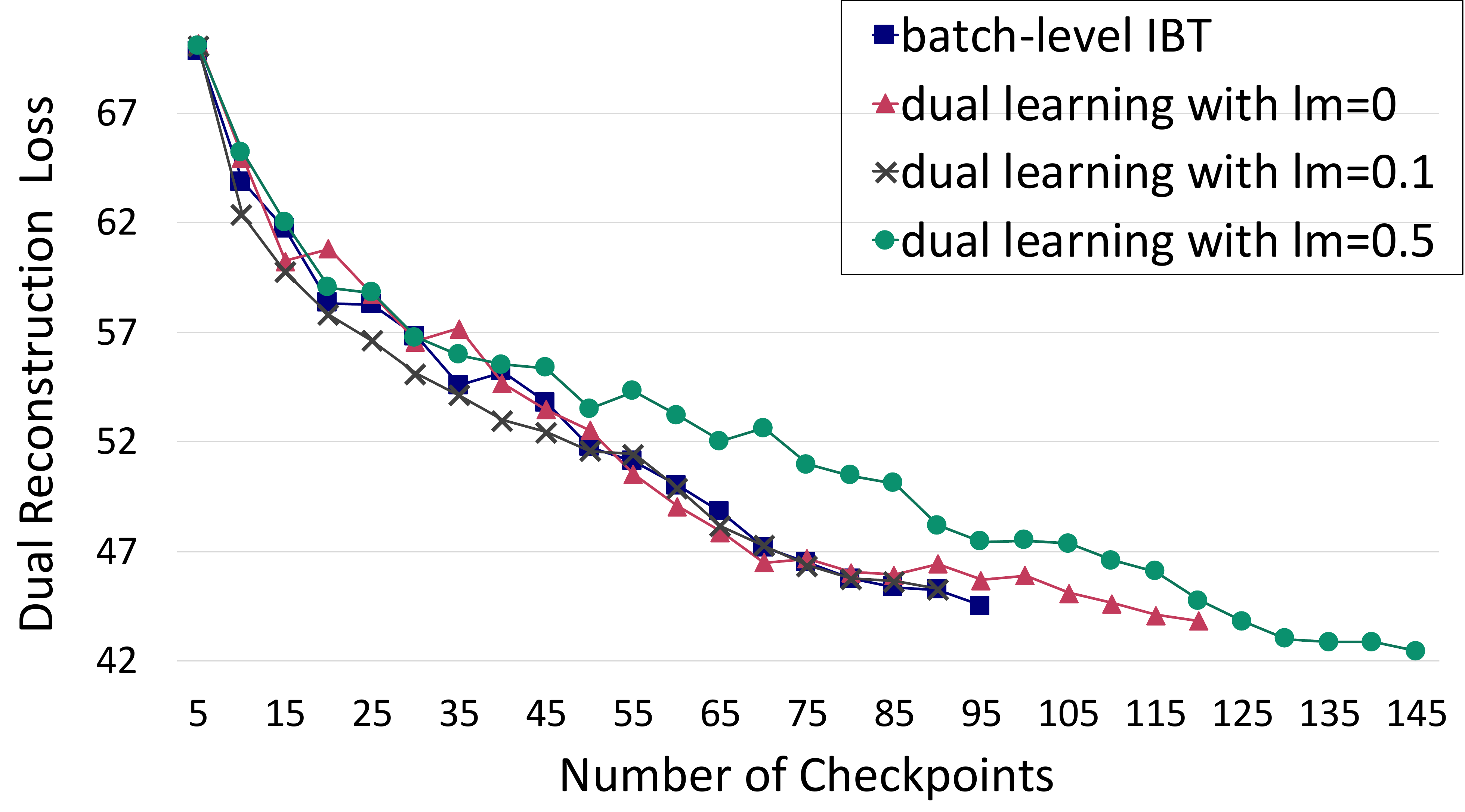}
        \caption{Low-resource task}
    \end{subfigure}
	\hfill
	
	\centering
    \begin{subfigure}[b]{0.4\textwidth}
        \centering
        \includegraphics[width=\textwidth]{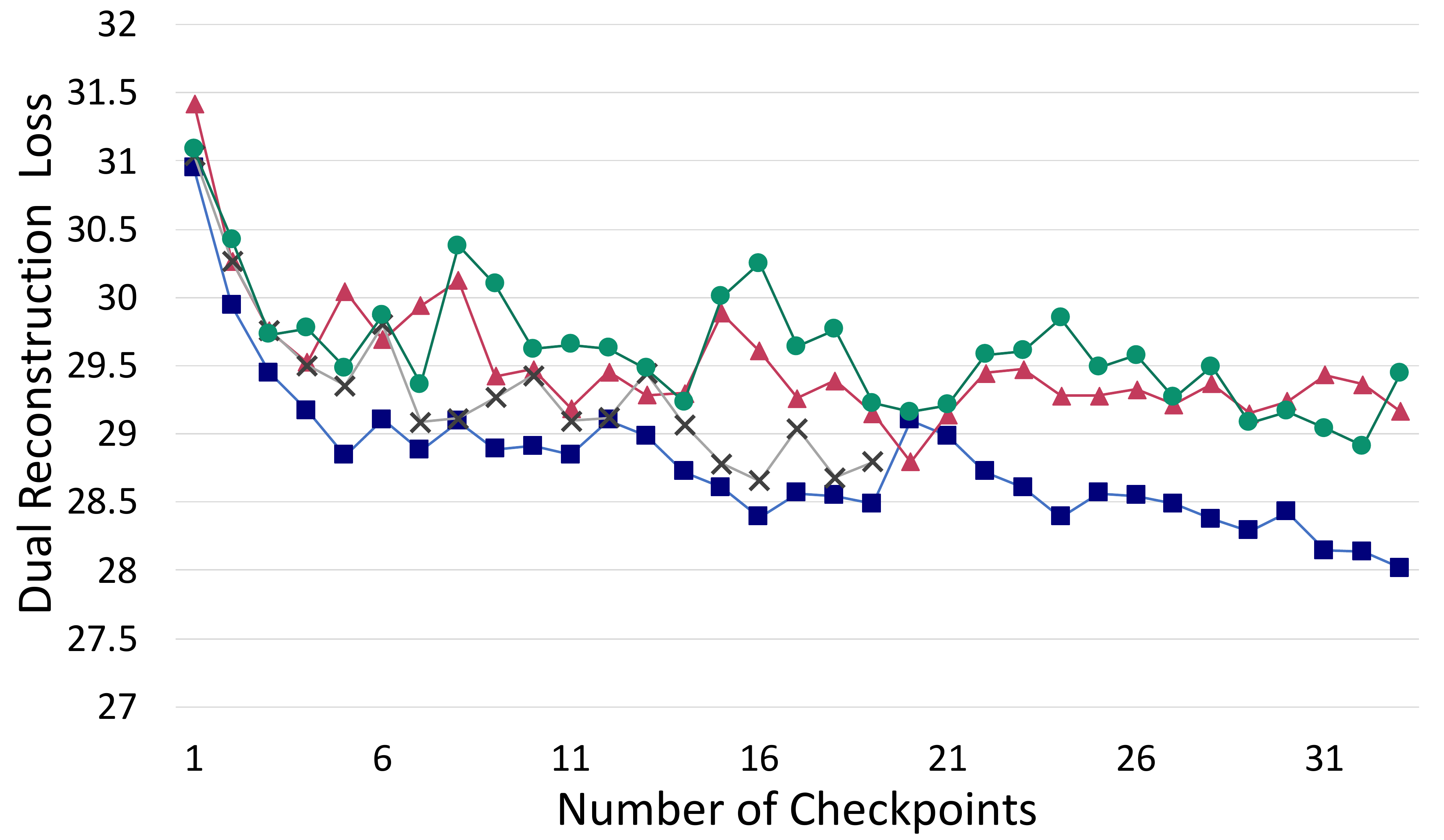}
        \caption{High-resource task}
    \end{subfigure}
	\hfill
	
	\centering
    \begin{subfigure}[b]{0.4\textwidth}
        \centering
        \includegraphics[width=\textwidth]{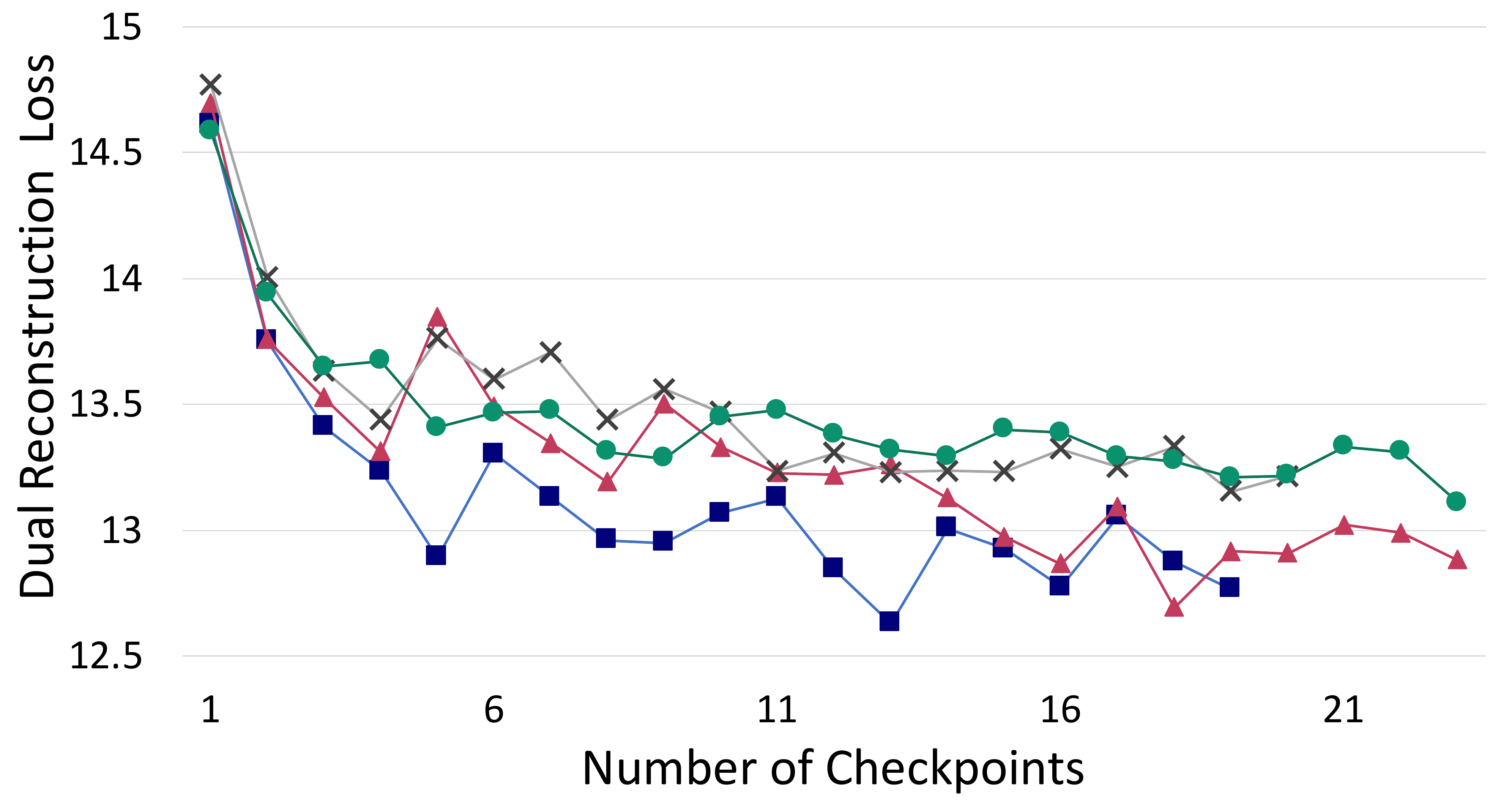}
        \caption{Cross-domain task}
    \end{subfigure}
\caption{Learning curves for the approximated dual reconstruction loss averaged over the training batches from both directions on the low-resource, high-resource, and cross-domain tasks.}
\label{fig:rec_curve}
\end{figure}

\subsection{Mutual Information Analysis}
\label{sec:mi_analysis}

We test the hypothesis that the mutual information constraint is met when training models on the combined supervised and unsupervised objectives in the low-resource setting (the most adversarial condition with the fewest supervised training samples).

The mutual information~$\mi{p_\theta}$ from Definition~\ref{def:mutual_info} can be computed by~\citet{HoffmanJ16}:
\begin{equation}
\begin{split}
    \mi{p_\theta} = & \e{\boldsymbol{x} \sim q(\boldsymbol{x})}{ \kld{ p_\theta(\boldsymbol{y}\, | \,\boldsymbol{x}) }{ p(\boldsymbol{y}) } } \\
    & - \kld{p_\theta(\boldsymbol{y})}{p(\boldsymbol{y})}
\end{split}
\end{equation}
where prior distributions~$q(\boldsymbol{x})$ and~$p(\boldsymbol{y})$ are estimated by the empirical data distribution given the monolingual corpora~$\mathcal{M}_X$ and~$\mathcal{M}_Y$. Although computing~$I_{p_\theta}$ directly is intractable, it can be approximated with a Monte Carlo estimate. Following~\citet{Dieng2019}, we approximate the two KL terms by Monte Carlo, where samples from~$p_\theta(\boldsymbol{y})$ can be obtained by ancestral sampling~(we use beam search with beam size of five to sample from~$p_\theta(\boldsymbol{y}\, | \,\boldsymbol{x})$). The marginal probability~$p_\theta(\boldsymbol{y}) = \e{\boldsymbol{x} \sim q(\boldsymbol{x})}{p_\theta(\boldsymbol{y}\, | \,\boldsymbol{x})}$ can also be estimated by Monte Carlo. Due to data sparsity, the conditional likelihood~$p_\theta(\boldsymbol{y}\, | \,\boldsymbol{x})$ will be near zero for most source sentences randomly sampled from~$q(\boldsymbol{x})$. To better estimate it, we smooth the data distribution of the original dataset~$\mathcal{D}$ by generating a randomly perturbed dataset~$\tilde{\mathcal{D}}$.\footnote{We generate~20 perturbed sentences per source via random word dropping with probability of~0.1 and permutation with maximum distance of~3.}

Table~\ref{tab:mutual_info} shows the normalized mutual information~$\tilde{\mi{}} - \log|\mathcal{D}|$ where~$\tilde{\mi{}}$ denotes the estimated mutual information.
It shows that, when training with the combination of supervised and unsupervised objectives, the normalized mutual information is within a small range between~$(-2.6 \times 10^{-4}, -1.4 \times 10^{-4})$ and is lower than the maximum normalized mutual information~$\log|\tilde{\mathcal{D}}| - \log|\mathcal{D}| \approx 3.0$ by a large margin. Thus, the mutual information can be bounded by appropriate values of~$\mi{min}$ and~$\mi{max}$ to satisfy the constraint.
In addition, these results confirm that updating the inference model using policy gradient in Dual Learning does not effectively increase model's mutual information.

\begin{table}
\centering
\scalebox{1}{
\begin{tabular}{lcc}
\toprule
& {tr-en} & {en-tr} \\
\midrule
{baseline} & {-2.47} & {-2.28} \\
{epoch-level \abr{ibt}-1} & {-2.57} & {-2.23} \\
{epoch-level \abr{ibt}-2} & {-2.18} & {-2.30} \\
{epoch-level \abr{ibt}-3} & {-2.32} & {-2.42} \\
{batch-level \abr{ibt}} & {-1.51} & {-1.82} \\
{dual learning w/ $\alpha_{LM} = 0$} & {-1.50} & {-1.80} \\
{dual learning w/ $\alpha_{LM} = 0.5$} & {-1.44} & {-1.77} \\
\bottomrule
\end{tabular}}
\caption{Results on estimated mutual information~$\tilde{\mi{}}$ in the low-resource setting. We report the normalized scores~$\tilde{\mi{}} - \log|\mathcal{D}|$ (on the scale of~$10^{-4}$) averaged over the two test sets. The range of normalized scores should be~$[-\log|\mathcal{D}|, \log\frac{|\tilde{\mathcal{D}}|}{|\mathcal{D}|}] = [-8.0, 3.0]$.}
\label{tab:mutual_info}
\end{table}

\section{Summary of Contributions}

We contribute theoretical and empirical results that improve our understanding of the connection between two seemingly distant semi-supervised training strategies for NMT: Iterative Back-Translation (IBT) and Dual Learning.

\looseness=-1
On the theory side, we define a dual reconstruction objective which unifies semi-supervised NMT techniques that exploit source and target monolingual text. We prove that optimizing this objective leads to the same global optimum as the intractable marginal likelihood objective, where the model's marginal distribution coincides with the prior language distribution while also maximizing the model's mutual information between source and target. IBT approximates this objective more closely than Dual Learning, despite the more complex objective and update strategies used in the latter.

We present a systematic empirical comparison of Back-Translation, IBT, and Dual Learning on six tasks spanning high-resource, low-resource, and cross-domain settings. 
Results support the theory that the LM loss and policy gradient estimation are unnecessary in Dual Learning, and show that IBT achieves better translation quality than Dual Learning. 
Analysis confirms that the mutual information constraint required to reach an interesting dual reconstruction optimum is satisfied in practice.

These findings lead us to recommend batch-level IBT to quickly boost model performance at early training stages and epoch-level IBT to further improve quality. Our theory also suggests future directions for improving unsupervised MT via more effective methods to maximize the model’s mutual information between source and target, and the potential of applying our dual reconstruction objective to other sequence-to-sequence tasks.

\section*{Acknowledgments}
We thank the anonymous reviewers, Jordan Boyd-Graber, Hal Daum\'e III, Naomi Feldman, Shi Feng, Pranav Goel, Alexander Miserlis Hoyle, Michelle Yuan, Mozhi Zhang, and the CLIP lab at UMD for helpful comments. This research is supported in part by an Amazon Web Services Machine Learning Research Award and by the Office of the Director of National Intelligence (ODNI), Intelligence Advanced Research Projects Activity (IARPA), via contract \#FA8650-17-C-9117. The views and conclusions contained herein are those of the authors and should not be interpreted as necessarily representing the official policies, either expressed or implied, of ODNI, IARPA, or the U.S. Government. The U.S. Government is authorized to reproduce and distribute reprints for governmental purposes notwithstanding any copyright annotation therein.

\bibliography{anthology,emnlp2020}
\bibliographystyle{acl_natbib}

\clearpage
\appendix
\section{Proof}
\label{sec:appendix:proof}

\subsection{Proof for Proposition 1}
\label{sec:appendix:proof_1}
\paragraph{Proposition 1.}
\label{appendix:prop}
Given prior distributions~$q(\boldsymbol{x})$ and~$p(\boldsymbol{y})$ over~$\boldsymbol{x} \in \Sigma_x$ and~$\boldsymbol{y} \in \Sigma_y$, if parameterized probability models~$p_\theta$ and~$q_\phi$ have enough capacity under the constraint that:
\begin{equation*}
\begin{split}
    & 0 \le \mi{min} \le \mi{p_\theta}, \mi{q_\phi} \le \mi{max} \le \max_{p \in P_{XY}} \mi{p}(\boldsymbol{x}; \boldsymbol{y}) \\
\end{split}
\end{equation*}
where~$\mi{min}$ and~$\mi{max}$ are pre-defined constant values between zero and the maximum mutual information between~$\boldsymbol{x}$ and~$\boldsymbol{y}$ given any joint distribution~$p(\boldsymbol{x}, \boldsymbol{y}) \in P_{XY}$ whose marginals satisfy~$\sum_{\boldsymbol{x}} p(\boldsymbol{x}, \boldsymbol{y}) = p(\boldsymbol{y})$ and~$\sum_{\boldsymbol{y}} p(\boldsymbol{x}, \boldsymbol{y}) = q(\boldsymbol{x})$.
Then, the dual reconstruction objective is upper-bounded by~$\mathcal{J}_{dual}(\theta, \phi) \le 2\mi{max} - \h{}{q(\boldsymbol{x})} - \h{}{p(\boldsymbol{y})}$, and the upper bound is achieved iff
\begin{equation}
\label{appendix:eq:optcrit}
\begin{split}
    & \mi{q_{\phi}} = \mi{max} \\
    & \mi{p_{\theta}} = \mi{max} \\
    & p_{\theta}(\boldsymbol{y}\, | \,\boldsymbol{x})
        = \frac{q_{\phi}(\boldsymbol{x}\, | \,\boldsymbol{y})}{q_{\phi}(\boldsymbol{x})} p(\boldsymbol{y}) \\
    & q_{\phi}(\boldsymbol{x}\, | \,\boldsymbol{y})
        = \frac{p_{\theta}(\boldsymbol{y}\, | \,\boldsymbol{x})}{p_{\theta}(\boldsymbol{y})} q(\boldsymbol{x}) \\
\end{split}
\end{equation}

\begin{proof}
First we prove that~$\mathcal{J}_1(\theta, \phi) \le \mi{max} - \h{}{p(\boldsymbol{y})}$, and the upper bound is achieved iff
\begin{equation*}
\begin{split}
    & \mi{q_{\phi}} = \mi{max} \\
    & p_{\theta}(\boldsymbol{y}\, | \,\boldsymbol{x})
        = \frac{q_{\phi}(\boldsymbol{x}\, | \,\boldsymbol{y})}{q_{\phi}(\boldsymbol{x})} p(\boldsymbol{y})
\end{split}
\end{equation*}
where~$\h{}{p(\boldsymbol{y})}$ is the entropy of the prior distribution~$p(\boldsymbol{y})$.

To show this, we denote the posterior distribution~$Q(\boldsymbol{y}\, | \,\boldsymbol{x}) = \frac{q_{\phi}(\boldsymbol{x}\, | \,\boldsymbol{y})}{q_{\phi}(\boldsymbol{x})} p(\boldsymbol{y})$, and rewrite~$\mathcal{J}_1$:
\begin{equation*}
\begin{split}
    \mathcal{J}_1
    =& \e{\boldsymbol{y} \sim p(\boldsymbol{y})} { \e{\boldsymbol{x} \sim q_\phi(\boldsymbol{x}\, | \,\boldsymbol{y})}{\log p_\theta(\boldsymbol{y}\, | \,\boldsymbol{x})} } \\
    =& \e{\boldsymbol{y} \sim p(\boldsymbol{y})} { \e{\boldsymbol{x} \sim q_\phi(\boldsymbol{x}\, | \,\boldsymbol{y})}{\log Q(\boldsymbol{y}\, | \,\boldsymbol{x})} } \\
    &+ \e{\boldsymbol{y} \sim p(\boldsymbol{y})} { \e{\boldsymbol{x} \sim q_\phi(\boldsymbol{x}\, | \,\boldsymbol{y})} {log\frac{p_\theta(\boldsymbol{y}\, | \,\boldsymbol{x})}{Q(\boldsymbol{y}\, | \,\boldsymbol{x})}} } \\
    =& \e{\boldsymbol{y} \sim p(\boldsymbol{y})} { \e{\boldsymbol{x} \sim q_\phi(\boldsymbol{x}\, | \,\boldsymbol{y})}{\log Q(\boldsymbol{y}\, | \,\boldsymbol{x})} } \\
    &+ \e{\boldsymbol{y} \sim p(\boldsymbol{y})} { \e{\boldsymbol{x} \sim q_\phi(\boldsymbol{x}\, | \,\boldsymbol{y})} {log\frac{p_\theta(\boldsymbol{y}\, | \,\boldsymbol{x}) q_\phi(\boldsymbol{x})}{q_\phi(\boldsymbol{x}\, | \,\boldsymbol{y}) p(\boldsymbol{y})}} } \\
    =& \e{\boldsymbol{y} \sim p(\boldsymbol{y})} { \e{\boldsymbol{x} \sim q_\phi(\boldsymbol{x}\, | \,\boldsymbol{y})}{\log Q(\boldsymbol{y}\, | \,\boldsymbol{x})} } \\
    &- \kld{q_\phi(\boldsymbol{x}\, | \,\boldsymbol{y}) p(\boldsymbol{y})}{p_\theta(\boldsymbol{y}\, | \,\boldsymbol{x}) q_\phi(\boldsymbol{x})} \\
    =& \e{\boldsymbol{y} \sim p(\boldsymbol{y})} { \e{\boldsymbol{x} \sim q_\phi(\boldsymbol{x}\, | \,\boldsymbol{y})}{\log \frac{q_\phi(\boldsymbol{x}\, | \,\boldsymbol{y})}{q_\phi(\boldsymbol{x})} } } \\
    &+ \e{\boldsymbol{y} \sim p(\boldsymbol{y})} { \e{\boldsymbol{x} \sim q_\phi(\boldsymbol{x}\, | \,\boldsymbol{y})}{\log p(\boldsymbol{y})} } \\
    &- \kld{q_\phi(\boldsymbol{x}\, | \,\boldsymbol{y}) p(\boldsymbol{y})}{p_\theta(\boldsymbol{y}\, | \,\boldsymbol{x}) q_\phi(\boldsymbol{x})} \\
    =& \mi{q_\phi} - \h{}{p(\boldsymbol{y})} \\
    &- \kld{q_\phi(\boldsymbol{x}\, | \,\boldsymbol{y}) p(\boldsymbol{y})}{p_\theta(\boldsymbol{y}\, | \,\boldsymbol{x}) q_\phi(\boldsymbol{x})} \\
\end{split}
\end{equation*}
Since the KL divergence between two distributions is always non-negative and is zero iff they are equal, we have
\begin{equation*}
\begin{split}
    \mathcal{J}_1(\theta, \phi)
    \le \mi{q_\phi} - \h{}{p(\boldsymbol{y})}
    \le \mi{max} - \h{}{p(\boldsymbol{y})}
\end{split}
\end{equation*}
and~$\mathcal{J}_1(\theta, \phi) = \mi{max} - \h{}{p(\boldsymbol{y})}$ iff
\begin{equation*}
\begin{split}
    & \mi{q_{\phi}} = \mi{max} \\
    & \kld{q_{\phi}(\boldsymbol{x}\, | \,\boldsymbol{y}) p(\boldsymbol{y})}{p_{\theta}(\boldsymbol{y}\, | \,\boldsymbol{x}) q_{\phi}(\boldsymbol{x})} = 0 \\
\end{split}
\end{equation*}
The second equality holds iff
\begin{equation*}
    p_{\theta}(\boldsymbol{y}\, | \,\boldsymbol{x})
    = \frac{q_{\phi}(\boldsymbol{x}\, | \,\boldsymbol{y})}{q_{\phi}(\boldsymbol{x})} p(\boldsymbol{y})
\end{equation*}

Similarly, we can prove that~$\mathcal{J}_2(\theta, \phi) \le \mi{max} - \h{}{q(\boldsymbol{x})}$, and the upper bound is achieved iff
\begin{equation*}
\begin{split}
    & \mi{p_{\theta}} = \mi{max} \\
    & q_{\phi}(\boldsymbol{x}\, | \,\boldsymbol{y})
        = \frac{p_{\theta}(\boldsymbol{y}\, | \,\boldsymbol{x})}{p_{\theta}(\boldsymbol{y})} q(\boldsymbol{x}) \\
\end{split}
\end{equation*}
thus~$\mathcal{J}_{dual}(\theta, \phi) \le 2\mi{max} - \h{}{q(\boldsymbol{x})} - \h{}{p(\boldsymbol{y})}$ and the upper bound is achieved iff~$\theta$ and~$\phi$ satisfy~\cref{appendix:eq:optcrit}, concluding the proof.
\end{proof}

\subsection{Proof for Proposition 2}
\label{sec:appendix:proof_2}
\paragraph{Proposition 2.}
\label{appendix:prop:construct}
Given distributions~$q(\boldsymbol{x})$ and~$p(\boldsymbol{y})$ over~$\boldsymbol{x} \in \Sigma_x$ and~$\boldsymbol{y} \in \Sigma_y$, if parameterized probability models~$p_\theta$ and~$q_\phi$ have enough capacity under the constraint that:
\begin{equation}
\label{appendix:eq:constraint}
\begin{split}
    & 0 \le \mi{min} \le \mi{p_\theta}, \mi{q_\phi} \le \mi{max} \le \max_{p \in P_{XY}} \mi{p}(\boldsymbol{x}; \boldsymbol{y}) \\
\end{split}
\end{equation}
where~$\mi{min}$ and~$\mi{max}$ are pre-defined constant values between zero and the maximum mutual information between~$\boldsymbol{x}$ and~$\boldsymbol{y}$ given any joint distribution~$p(\boldsymbol{x}, \boldsymbol{y}) \in P_{XY}$ whose marginals satisfy~$\sum_{\boldsymbol{x}} p(\boldsymbol{x}, \boldsymbol{y}) = p(\boldsymbol{y})$ and~$\sum_{\boldsymbol{y}} p(\boldsymbol{x}, \boldsymbol{y}) = q(\boldsymbol{x})$.
Then there exist~$\theta^*$ and~$\phi^*$ such that:
\begin{equation}
\label{appendix:eq:crit}
\begin{split}
    & \mi{q_{\phi^*}} = \mi{p_{\theta^*}} = \mi{max} \\
    & p_{\theta^*}(\boldsymbol{y}\, | \,\boldsymbol{x})
        = \frac{q_{\phi^*}(\boldsymbol{x}\, | \,\boldsymbol{y})}{q_{\phi^*}(\boldsymbol{x})} p(\boldsymbol{y}) \\
    & q_{\phi^*}(\boldsymbol{x}\, | \,\boldsymbol{y})
        = \frac{p_{\theta^*}(\boldsymbol{y}\, | \,\boldsymbol{x})}{p_{\theta^*}(\boldsymbol{y})} q(\boldsymbol{x}) \\
\end{split}
\end{equation}

\begin{proof}
Since~$\mi{max}$ satisfies
\begin{equation*}
0 = \min_{p \in P_{XY}} \mi{p}(\boldsymbol{x}; \boldsymbol{y}) \le \mi{max} \le \max_{p \in P_{XY}} \mi{p}(\boldsymbol{x}; \boldsymbol{y})
\end{equation*}
there exists a joint distribution~$p^*(\boldsymbol{x}, \boldsymbol{y}) \in P_{XY}$ such that
\begin{equation*}
    \mi{p^*}(\boldsymbol{x}; \boldsymbol{y}) = \mi{max}
\end{equation*}

As models~$p_\theta$ and~$q_\phi$ have enough capacity under the constraint in~\cref{appendix:eq:constraint}, there exist~$\theta^*$ and~$\phi^*$ such that~$\forall x \in \Sigma_x, \forall y \in \Sigma_y$
\begin{equation*}
\begin{split}
    & p_{\theta^*}(\boldsymbol{y}\, | \,\boldsymbol{x}) = \frac{p^*(\boldsymbol{x}, \boldsymbol{y})}{q(\boldsymbol{x})} \\
    & q_{\phi^*}(\boldsymbol{x}\, | \,\boldsymbol{y}) = \frac{p^*(\boldsymbol{x}, \boldsymbol{y})}{p(\boldsymbol{y})}
\end{split}
\end{equation*}
thus
\begin{equation*}
\begin{split}
    & p_{\theta^*}(\boldsymbol{y}) = \sum_{\boldsymbol{x}} p_{\theta^*}(\boldsymbol{y}\, | \,\boldsymbol{x}) q(\boldsymbol{x}) = \sum_{\boldsymbol{x}} p^*(\boldsymbol{x}, \boldsymbol{y}) = p(\boldsymbol{y}) \\
    & q_{\phi^*}(\boldsymbol{x}) = \sum_{\boldsymbol{y}} q_{\phi^*}(\boldsymbol{x}\, | \,\boldsymbol{y}) p(\boldsymbol{y}) = \sum_{\boldsymbol{y}} p^*(\boldsymbol{x}, \boldsymbol{y}) = q(\boldsymbol{x})
\end{split}    
\end{equation*}
and thus
\begin{equation*}
\begin{split}
    \mi{p_{\theta^*}}
    &= \e{\boldsymbol{x} \sim q(\boldsymbol{x})} { \e{\boldsymbol{y} \sim p_{\theta^*}(\boldsymbol{y}\, | \,\boldsymbol{x})}{\log \frac{p_{\theta^*}(\boldsymbol{y}\, | \,\boldsymbol{x})}{p_{\theta^*}(\boldsymbol{y})} } } \\
    &= \e{\boldsymbol{x}, \boldsymbol{y} \sim p^*(\boldsymbol{x}, \boldsymbol{y})}{\log \frac{p^*(\boldsymbol{x}, \boldsymbol{y})}{q(\boldsymbol{x}) p(\boldsymbol{y}) }} \\
    &= \mi{p^*}(\boldsymbol{x}; \boldsymbol{y}) \\
    &= \mi{max} \\
\end{split}
\end{equation*}

\begin{equation*}
\begin{split}
    \mi{q_{\phi^*}}
    &= \e{\boldsymbol{y} \sim p(\boldsymbol{y})} { \e{\boldsymbol{x} \sim q_{\phi^*}(\boldsymbol{x}\, | \,\boldsymbol{y})}{\log \frac{q_{\phi^*}(\boldsymbol{x}\, | \,\boldsymbol{y})}{q_{\phi^*}(\boldsymbol{x})} } } \\
    &= \e{\boldsymbol{x}, \boldsymbol{y} \sim p^*(\boldsymbol{x}, \boldsymbol{y})}{\log \frac{p^*(\boldsymbol{x}, \boldsymbol{y})}{q(\boldsymbol{x}) p(\boldsymbol{y}) }} \\
    &= \mi{p^*}(\boldsymbol{x}; \boldsymbol{y}) \\
    &= \mi{max} \\
\end{split}
\end{equation*}
and
\begin{equation*}
    \frac{q_{\phi^*}(\boldsymbol{x}\, | \,\boldsymbol{y})}{q_{\phi^*}(\boldsymbol{x})} p(\boldsymbol{y})
    = \frac{p^*(\boldsymbol{x}, \boldsymbol{y})}{p(\boldsymbol{y})} \frac{p(\boldsymbol{y})}{q(\boldsymbol{x})}
    = p_{\theta^*}(\boldsymbol{y}\, | \,\boldsymbol{x})
\end{equation*}

\begin{equation*}
    \frac{p_{\theta^*}(\boldsymbol{y}\, | \,\boldsymbol{x})}{p_{\theta^*}(\boldsymbol{y})} q(\boldsymbol{x})
    = \frac{p^*(\boldsymbol{x}, \boldsymbol{y})}{q(\boldsymbol{x})} \frac{q(\boldsymbol{x})}{p(\boldsymbol{y})}
    = q_{\phi^*}(\boldsymbol{x}\, | \,\boldsymbol{y})
\end{equation*}
concluding the proof.
\end{proof}

\subsection{Proof for Theorem 1}
\label{sec:appendix:theorem}
\paragraph{Theorem 1.}
\label{appendix:theorem}
Given prior distributions~$q(\boldsymbol{x})$ and~$p(\boldsymbol{y})$ over~$\boldsymbol{x} \in \Sigma_x$ and~$\boldsymbol{y} \in \Sigma_y$, if parameterized probability models~$p_\theta$ and~$q_\phi$ have enough capacity under the constraint that:
\begin{equation*}
\begin{split}
    & 0 \le \mi{min} \le \mi{p_\theta}, \mi{q_\phi} \le \mi{max} \le \max_{p \in P_{XY}} \mi{p}(\boldsymbol{x}; \boldsymbol{y}) \\
\end{split}
\end{equation*}
where~$\mi{min}$ and~$\mi{max}$ are pre-defined constant values between zero and the maximum mutual information between~$\boldsymbol{x}$ and~$\boldsymbol{y}$ given any joint distribution~$p(\boldsymbol{x}, \boldsymbol{y}) \in P_{XY}$ whose marginals satisfy~$\sum_{\boldsymbol{x}} p(\boldsymbol{x}, \boldsymbol{y}) = p(\boldsymbol{y})$ and~$\sum_{\boldsymbol{y}} p(\boldsymbol{x}, \boldsymbol{y}) = q(\boldsymbol{x})$.
Let $\theta^*, \phi^*$ be the global optimum of the dual reconstruction objective~$\max_{\theta, \phi} \mathcal{J}_{dual}(\theta, \phi)$, then~$q_{\phi^*}(\boldsymbol{x}) = q(\boldsymbol{x})$,~$p_{\theta^*}(\boldsymbol{y}) = p(\boldsymbol{y})$, and~$\mi{q_{\phi^*}} = \mi{p_{\theta^*}} = \mi{max}$.

\begin{proof}
Suppose models~$p_\theta$ and~$q_\phi$ have enough capacity under the constraint that:
\begin{equation*}
\begin{split}
    & 0 \le \mi{min} \le \mi{p_\theta}, \mi{q_\phi} \le \mi{max} \le \max_{p \in P_{XY}} \mi{p}(\boldsymbol{x}; \boldsymbol{y}) \\
\end{split}
\end{equation*}
then based on~\cref{prop},~$\mathcal{J}_{dual}(\theta, \phi) \le 2\mi{max} - \h{}{q(\boldsymbol{x})} - \h{}{p(\boldsymbol{y})}$, and the upper bound is achieved iff the optimal criteria~\cref{appendix:eq:optcrit} hold.
And based on~\cref{prop:construct}, there exists a solution~$p_{\theta^*}$ an $q_{\phi^*}$ for the criteria~\cref{appendix:eq:optcrit}.
Thus $$\max_{\theta, \phi} \mathcal{J}_{dual}(\theta, \phi) =2\mi{max} - \h{}{q(\boldsymbol{x})} - \h{}{p(\boldsymbol{y})}$$

Based on the first equation in~\cref{appendix:eq:optcrit}, we have:
\begin{equation}
    \mi{q_{\phi^*}} = \mi{p_{\theta^*}} = \mi{max}
\end{equation}
And multiply the last two equations, we have:
\begin{equation}
\label{appendix:eq:product}
\begin{split}
    p(\boldsymbol{y}) q(\boldsymbol{x}) = q_{\phi^*}(\boldsymbol{x}) p_{\theta^*}(\boldsymbol{y})
\end{split}
\end{equation}
Given Lemma~1, we have~$q_{\phi^*}(\boldsymbol{x}) = q(\boldsymbol{x})$ and~$p_{\theta^*}(\boldsymbol{y}) = p(\boldsymbol{y})$, concluding the proof.
\end{proof}

\begin{table*}[ht]
\centering
\begin{tabular}{lrrrr}
\toprule
\textbf{Task} & {NMT.xx-en} & {NMT.en-xx} & {LM.xx} & {LM.en} \\\hline
{\bf low-resource} & 22.12 &	30.92 &	121.13 &	78.96 \\
{\bf high-resource} & 6.72 &	6.25 &	78.32 &	74.00 \\
{\bf cross-domain} & 8.17 &	7.35 &	102.43 &	92.70 \\\hline
\end{tabular}
\caption{Validation perplexity of the NMT and LM models. We denote English as \textit{en} and the other language as \textit{xx}.}
\label{tab:valid_ppl}
\end{table*}

\begin{table*}[ht]
\centering
\begin{tabular}{lrrrr}
\toprule
\textbf{Task} & {NMT.xx-en} & {NMT.en-xx} & {LM.xx} & {LM.en} \\\hline
{\bf low-resource} & 92811565 & 92811565 & 27311123	& 19247448 \\
{\bf high-resource} & 98346302 &	98346302 &	32165523 &	24095698 \\
{\bf cross-domain} & 98346302 &	98346302 &	24806023 &	18768773 \\\hline
\end{tabular}
\caption{Number of model parameters. We denote English as \textit{en} and the other language as \textit{xx}.}
\label{tab:model_size}
\end{table*}

\paragraph{Lemma 1.}
\label{lemma:eq}
\looseness=-1
Let~$p(x)$ and~$p'(x)$ be two discrete probability functions over random variable $x \in \Sigma_x$, and $q(y)$ and~$q'(y)$ be two discrete probability functions over random variable~$y \in \Sigma_y$.\\
If~$\forall x \in \Sigma_x, \forall y \in \Sigma_y,\, p'(x)q'(y) = p(x)q(y)$, then~$\forall x \in \Sigma_x, \forall y \in \Sigma_y$,~$p'(x) = p(x)$ and~$q'(y) = q(y)$.

\begin{proof}
Let~$x_0 \in \Sigma_x$ such that~$p(x_0) \neq 0$ and~$p'(x_0) \neq 0$, and~$y_0 \in \Sigma_y$ such that~$q(y_0) \neq 0$ and~$q'(y_0) \neq 0$.

Since
\begin{equation}
    p'(x_0)q'(y_0) = p(x_0)q(y_0)
\end{equation}
and for any~$x \in \Sigma_x$
\begin{equation}
    p'(x)q'(y_0) = p(x)q(y_0)
\end{equation}
we have
\begin{equation}
    \frac{p'(x)}{p'(x_0)} = \frac{p(x)}{p(x_0)}
\end{equation}.

\looseness=-1
Given $\sum_{x \in \Sigma_x} p'(x) = 1$, $\sum_{x \in \Sigma_x} p(x) = 1$, and since
\begin{equation}
    \sum_{x \in \Sigma_x} p'(x) = \frac{p'(x_0)}{p(x_0)} \sum_{x \in \Sigma_x} p(x)
\end{equation}
we have~$p'(x_0) = p(x_0)$.

Thus for any~$x \in \Sigma_x$,~$p'(x) = \frac{p'(x_0)}{p(x_0)} p(x) = p(x)$. For any~$y \in \Sigma_y$,~$q'(y) = \frac{p(x)}{p'(x)} q(y) = q(y)$, concluding the proof.
\end{proof}

\section{Experimental Setup}
\label{sec:appendix:exp}
\subsection{Tasks and Data}
We evaluate on six translation tasks including  German$\leftrightarrow$English~(de-en),\footnote{We exclude Rapid and ParaCrawl corpora as they are noisy and thus require data filtering~\cite{Morishita2018}.} Turkish$\leftrightarrow$English~(tr-en) from WMT18~\citep{BojarFFGHHKM18},\footnote{\url{http://www.statmt.org/wmt18/translation-task.html}} and a cross-domain task which tests de$\leftrightarrow$en models trained on WMT data on the TED test sets from IWSLT17~\citep{IWSLT17}.\footnote{\url{https://wit3.fbk.eu/mt.php?release=2017-01-ted-test}}

\subsection{Model and Training Configuration}
We adopt the base Transformer model~\citep{Vaswani2017} with~$d_{\text{model}}=512$,~$d_{\text{hidden}}=2048$,~$n_{\text{heads}}=8$,~$n_{\text{layers}}=6$, and $p_\text{drop}=0.1$. We tie the source and target embeddings with the output layer weights~\citep{PressW17,NguyenC18}.

We use the Adam optimizer~\citep{KingmaB15} with a batch size of~$32$ sentences and checkpoint the model every~$2500$ updates. Training hyperparameters and stopping criteria are constant across all comparable experimental conditions. Initial learning rates for pre-training and fine-tuning are respectively set to~$10^{-4}$ and~$2 \times 10^{-5}$. We decay the learning rate by~$30\%$ and reload the best model after~$3$ checkpoints without improvement. We apply early stopping after repeating this process for~$5$ times. We adopt the same learning rate decay and stopping criteria during fine-tuning. For batch-level IBT and Dual Learning, we check whether both models improve validation perplexity. For epoch-level IBT, we run for~$3$ iterations.

The LMs in Dual Learning are RNNs~\citep{MikolovKBCK10} with~$512$ hidden units, embeddings of size~$512$, and dropout of~$0.2$ to hidden states. We tie the input embeddings with the output layer weights. We clip the gradients at a threshold of~$5$. We train them similarly to NMT models, except setting the batch size to~$64$ sentences and the initial learning rate to~$0.001$. We decay the learning rate by~$50\%$ and reload the best model after~$5$ checkpoints without validation perplexity improvement and apply early stopping after repeating the process for~$5$ times. We report the validation perplexity of the NMT and LM models in Table~\ref{tab:valid_ppl}, and the model sizes in Table~\ref{tab:model_size}. All experiments are performed on a single NVIDIA GeForce GTX 1080 Ti GPU.
\end{document}